\theoremstyle{plain}
  \newtheorem{teo}{Theorem}
  \newtheorem{lem}{Lemma}
  \newtheorem{prop}{Proposition}
\theoremstyle{definition}
\theoremstyle{remark}
  \newtheorem{remark}{Remark}
\begin{document}
\title{Unsupervised Bump Hunting Using Principal Components}
\date{\today}
\author{Daniel A D\'{\i}az-Pach\'on\thanks{Ddiaz3@med.miami.edu} \and Jean-Eudes Dazard\thanks{jxd101@case.edu} \and J. Sunil Rao\thanks{JRao@biostat.med.miami.edu}}

\maketitle

\begin{abstract}
Principal Components Analysis is a widely used technique for dimension reduction and characterization of variability in multivariate populations. Our interest lies in studying when and why the rotation to principal components can be used effectively within a response-predictor set relationship in the context of mode hunting. Specifically focusing on the Patient Rule Induction Method (PRIM), we first develop a fast version of this algorithm (fastPRIM) under normality which facilitates the theoretical studies to follow.  Using basic geometrical arguments, we then demonstrate how the PC rotation of the predictor space alone can in fact generate improved mode estimators. Simulation results are used to illustrate our findings.\\
\textbf {Key words:} Algorithms, Bump hunting, Computationally intensive methods, Mode hunting, Principal components.
\end{abstract}

\section{Introduction}\label{Intro}

The PRIM algorithm for bump hunting was first developed by \citet{FriedmanFisher1999}. It is an intuitively useful computational algorithm for the detection of local maxima (or minima) on target functions. Roughly speaking, PRIM {\it peels} the (conditional) distribution of a response from the outside in, leaving at the end rectangular boxes which are supposed to contain a bump (see the formal description in Algorithm \ref{AlgoPRIM}) at page \pageref{AlgoPRIM}. However, some shortcomings against this procedure have also appeared in the literature when several dimensions are under consideration. For instance, as \citet{PolonikWang2010} explained it, the method could fail when there are two or more modes in high-dimensional settings.

Almost at the same time, \citet{DazardRao2010} proposed a supervised bump hunting strategy, given that the use of PRIM is still ``challenged in the context of high-dimensional data''. The strategy, called Local Sparse Bump Hunting (LSBH) is outlined in Algorithm \ref{AlgoLSBH} at page \pageref{NotationConcepts}. Summarizing the algorithm, it uses a recursive partitioning algorithm (CART) to identify subregions the whole space where at most one mode is estimated to be present; then a Sparse Principal Component Analysis (SPCA) is performed separately on each local partition; and finally, the location of the bump is determined via PRIM in the local, rotated and projected subspace induced by the sparse principal components.

As an example, we show in Figure \ref{figure01} simulation results representing a multivariate bimodal situation in the presence of noise, similarly to the simulation design used by \citet{DazardRao2010}. We simulated in a three-dimensional input space ($p = 3$) for visualization purposes. The data consists of a mixture of two trivariate normal distributions, taking on discrete binary response values ($Z \in \{1,2\}$), noised by a trivariate uniform distribution with a null response ($Z = 0$), so that the the data can be written by $X \sim w \cdot N_p(0,\Sigma) + (1 - w) \cdot B_p$, where $B_p \sim U_p[a,b]$, $w \in [0,1]$ is the mixing weight, and $(a,b) \in \mathbb R^2$.

\begin{figure}[!hbt]
    \centering
    \includegraphics[width=1.0\textwidth]{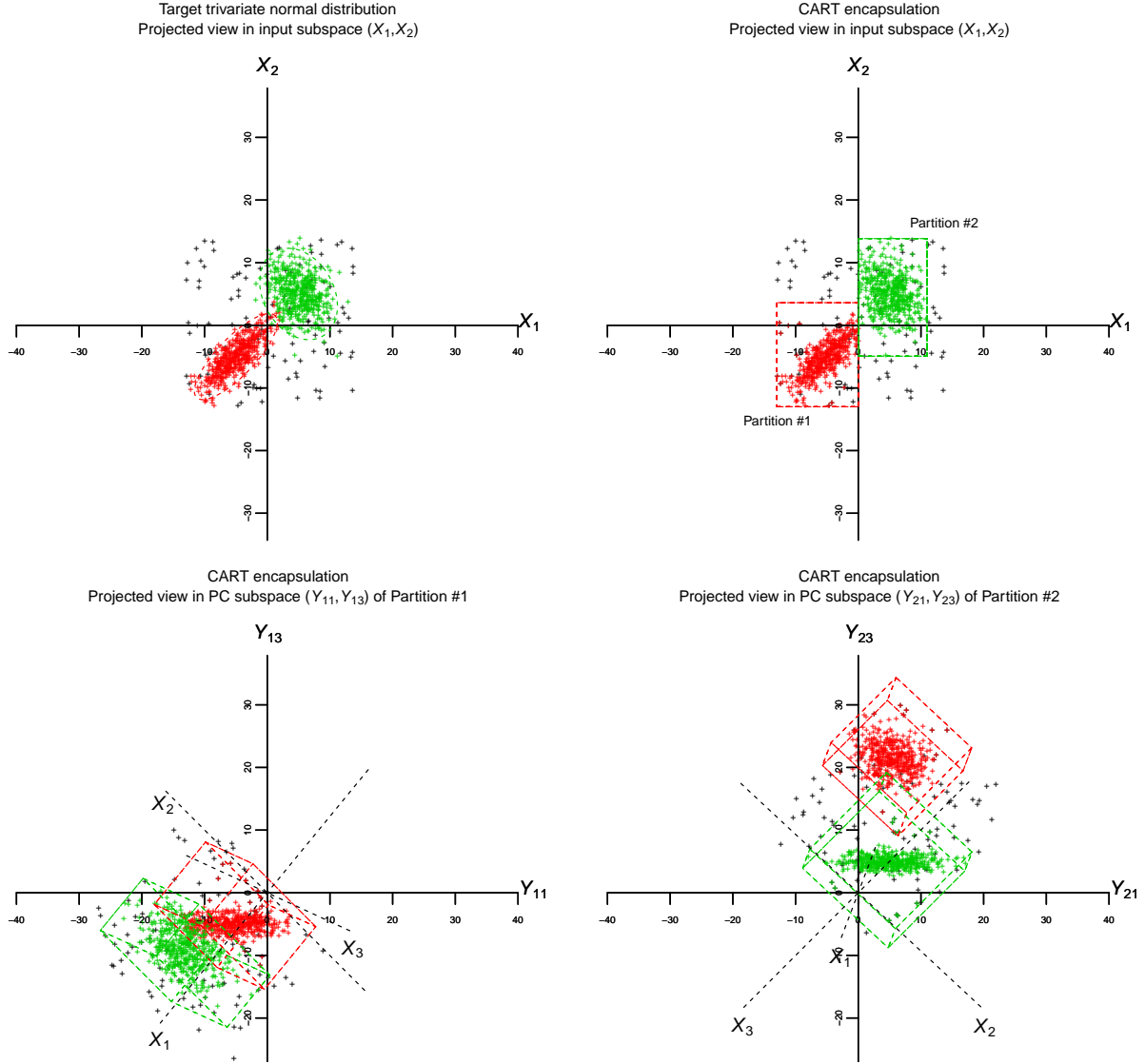}
    \vskip -10pt
    \caption{Illustration of the efficiency of the encapsulation process by LSBH of two target normal distributions (red and green dots), in the presence of 10\% ($w = 0.9$) noise distribution (black dots) in a three-dimensional input space ($p = 3$). We let the total sample size be $n = 10^3$. Top row: each plot represents a projected view of the data in input subspace ($X_{1}$, $X_{2}$) with 95\% confidence ellipses (dotted red and green contours - top left panel) and partitions vertices (top right panel). Only those partitions encapsulating the target distributions are drawn. Bottom row: each plot represents a projected view of the data in the PC subspace ($Y_{11}$, $Y_{13}$) of Partition \#1 (bottom left), and ($Y_{21}$, $Y_{23}$) of Partition \#2 (bottom right).}
    \label{figure01}
\end{figure}

Notice how the data in the PC spaces determined by Partition \#1 and \#2 do align with the PC coordinate axes $Y_{11}$ and $Y_{21}$, respectively (Figure \ref{figure01}).

Our goal in this paper is to provide some theoretical basis for the use of PCs in mode hunting using PRIM and a modified version of this algorithm that we called ``fastPRIM''.  Although the original LSBH algorithm accepts more than one mode by partition, we will restrict ourselves to the case in which there is at most one on each partition, in order to get more workable developments and more understandable results in this work. %So, in Section \ref{VarMix}, we prove Theorem \ref{TeoMix}: that the variation of a mixture of random variables resembles more closely the variation of the dominant variable in the mixture; this result gives a justification to the first step in Algorithm \ref{AlgoLSBH}.  It is in the context of this first step of the LSBH algorithm that the results in this article operate: on the local, unimodal regions after step 1 on the Algorithm \ref{AlgoLSBH}.

In Section \ref{NotationConcepts} we define the algorithms we are working with and set some useful notation. Section \ref{SecfastPRIM} proposes a modification of PRIM (called fastPRIM) for the particular case in which the bumps are modes in a setting of normal variables that allows to compare the boxes in the original space and in the rotation induced by principal components. The approach goes beyond normality and can be shown to be true for every symmetric distributions with finite second moment, and it is also an important reduction on the computational complexity since it is also useful for samples when $n\gg0$, via the central limit theorem (Subsection \ref{fastPRIMdata}). In this section we also present simulations which display the differences between considering the original space or the PC rotation for PRIM and fastPRIM. Finally, Section \ref{SpacesCompared} proves Theorem \ref{main}, a result explaining why the (volume-standardized) output box mode is higher in the PC rotation than in the original input space, a situation observed computationally by \citet{DazardRao2010} for which we give here a formal explanation. Theorem \ref{ComparingPRIMAndFastPRIM} shows that in terms of bias and variance, fastPRIM does better than PRIM. Finally, in Section \ref{Simulations} we show additional simulations relevant to the results found in Section \ref{SpacesCompared}.

\section{Notation and basic concepts}\label{NotationConcepts}

We set here the concepts that will be useful throughout the paper to define the algorithms and its modifications. Our notation on PRIM follows as a guideline the one used by \citet{PolonikWang2010}.   

Let $X$ be a $p$-dimensional real-valued random vector with distribution $F$. Let $Z$ be an integrable random variable. Let $m(x):=\textbf E[Z|X=x]$, $x\in\mathbb R^p$. Assume without loss of generality that $m(x)\geq0$.

Define $I(A):=\int_Am(x)dF(x)$, for $A\subset\mathbb R^p$. So when $A=\mathbb R^p$, then $I(A)=\textbf EZ$. We are interested in a region $C$ such that

\begin{align}\label{Goal}
	ave(C):=\frac{I(C)}{F(C)} > \rho,
\end{align}	
where $\rho=ave(\mathbb R^p)$. Note then that $ave(C)$ is just a notational convenience for the average of $Z$ given $X\in C$.

Given a box $B$ whose sides are parallel to the coordinate axes of $\mathbb R^p$, we peel small pieces of $B$ parallel to its sides and we stop peeling when what remains of the box $B$ becomes too small. Let the class of all these boxes be denoted by $\mathcal B$. Given a subset $S(X) = S \subseteq \mathbb R^p$ and a parameter $\beta\in(0,1)$, we define

\begin{align}\label{OptBox}
	B^*_\beta=\arg\max_{B\in\mathcal B}\{ave(B|S):F(B|S)=\beta\},
\end{align}
where $ave(B|S)=I(B|S)/F(B|S)$. In words, $B^*_\beta$ is the box with maximum average of $Z$ among all the boxes whose $F$-measure, conditioned to the points in the box $S$, is $\beta$. The former definitions set the stage to define Algorithm \ref{AlgoPRIM} at page \pageref{AlgoPRIM} below.

Some remarks are in order given Algorithm \ref{AlgoPRIM}:

\begin{algorithm}[!ht]
    \caption{Patient Rule Induction Method}
    \label{AlgoPRIM}
    \begin{itemize}
        \item(Peeling) Begin with $B_1=S$. For $l=1,\ldots,L-1$, where $(1-\alpha)^L=\beta$, and $\alpha\in(0,1)$, remove a subbox contained in $B_l$, chosen among $2p$ candidates given by:
        \begin{align}\label{bjk}
		b_{j1}:=\{x\in B:x_j<x_{j(\alpha)}\},\nonumber\\
		b_{j2}:=\{x\in B:x_j>x_{j(1-\alpha)}\},
        \end{align}
        where $j=1,\ldots,p$. The subbox $b^*_l$ chosen for removal gives the largest expected value of $Z$ conditional on $B_l\setminus b^*_l(X)$. That is,
        \begin{align}\label{bmin}
		b^*_l = \arg\min\left\{I\left(b_{jv}|B_l\right):j=1,\ldots,p \text{ and } v=1,2\right\}. 	
        \end{align}
        Then $B_l$ is replaced by $B_{l+1}=B_l \setminus b^*_l$ and the process is iterated as long as the current box $B_l$ be such that $F(B_l|S)\geq\beta+\alpha$.
        \item (Pasting) Alongside the $2p$ boundaries of the resulting box $B$ on the peeling part of the algorithm we look for a box $b^+\subset S\setminus B$ such that $F(b^+|S)=\alpha F(B|S)$ and $ave((B\cup b^+)\cap S) > ave(B\cap S)$. If there exists such a box $b^+$, we replace $B$ by $(B\cup b^+)$. If there exists more than one box satisfying that condition, we replace $B$ by the one that maximizes the average $ave((B\cup b^+)\cap S)$. In words, pasting is an enlargement on the Lebesgue measure of the box which is also an enlargement on the average $ave((B\cup b^+) \cap S)$.
        \item (Covering) After the first application of the peeling-pasting process, we update $S$ by $S \setminus B_1$, where $B_1$ is the box found after pasting, and iterate the peeling-pasting process replacing $S = S^{(1)}$ by $S^{(2)} = S^{(1)} \setminus B_1$, and so on, removing at each step $k=1,\ldots,t$ the optimal box of the previous step: $S^{(k)} = S^{(k-1)} \setminus B_{k-1}$, so that $S^{(k)} = S^{(1)} \setminus \cup_{1 \leq b \leq k-1}B_b$. At the end of the PRIM algorithm we are left with a region, shaped as a rectangular box:
            \begin{align}\label{FinalRegion}
		      R_\rho(p,k)=\bigcup_{ave\left(B_k | S^{(k)}\right)\geq\rho}\left\{B_k | S^{(k)}\right\}.
            \end{align}
    \end{itemize}
\end{algorithm}

\begin{remark}
	The value $\alpha$ is the second tuning parameter and $x_{j(\alpha)}$ is the $\alpha$-quantile of $F_j(\cdot|B_l)$, the marginal conditional 		distribution function of $X_j$ given the occurrence of $B_l$. Thus, by construction,
	\begin{align}\label{marginal}
		\alpha=F_j\left(b_{jv}|B_l\right)=F\left(b_{jv}|B_l\right).
	\end{align}
\end{remark}

\begin{remark}
    	Conditioning on an event, say $\tilde A$, is equivalent to conditioning on the random variable $\textbf 1\{x\in\tilde A\}$; i.e., when this occurs, as in 	(\ref{OptBox}), we are conditioning on a Bernoulli random variable.
\end{remark}

\begin{remark}
	When dealing with a sample, we define analogs of the terms used previously and replace those terms in Algorithm \ref{AlgoPRIM} with:
	\begin{align*}
		& I_n(C)=\frac{1}{n}\sum_{i=1}^nZ_i\textbf 1\{X_i\in C\},\\
		& F_n(C)=\frac{1}{n}\sum_{i=1}^n\textbf 1\{X_i\in C\},\\
		& ave_n(C)=\frac{I_n(C)}{F_n(C)},
	\end{align*}
	where $F_n$ is the empirical cumulative distribution of $X_1,\ldots,X_n$.
\end{remark}

\begin{remark}\label{FinalBoxProbability}
	Ignore the pasting stage, considering only peeling and covering. Let us call $\beta_T$ the probability of the final region. Then
	\begin{align*}
		\beta_T= \textbf P[x\in R_\rho(p)]&=\sum_{k=1}^t\beta(1-\beta)^{k-1}\\
		&=1-(1-\beta)^t.
	\end{align*}
\end{remark}

\begin{algorithm}[!ht]
    \caption{Local Sparse Bump Hunting}
    \label{AlgoLSBH}
	\begin{itemize}
        \item Partition the input space into $R$ partitions $P_1,\ldots, P_R$, using a tree-based algorithm like CART, in such a way that there is at most one mode in each of the partitions.
        \item For $r$ from 1 to $\tilde r$
        \begin{itemize}
            \item If $P_r$ is elected for bump hunting (i.e.; if $G_r$, the number of class labels in $P_r$, is greater than 1)
            \begin{itemize}
                \item Run a local SPCA in the partition $P_r$, rotating and reducing the space to $p'\ (\leq p$) dimensions, and if possible, decorrelating the sparse principal components (SPC). Call this resulting space $\mathcal T(P_r)$.
                \item Estimate PRIM meta-parameters $\alpha$ and $\beta$ in $\mathcal T(P_r)$.
                \item Run a local and tuned PRIM-based bump hunting within $\mathcal T(P_r)$ to get descriptive rules of the bumps in the SPC space of the form $R_\rho^{(r)}(p')$, as in (\ref{FinalRegion}), where $r$ indicates the partition being considered.
                \item Rotate the local rules $R^{(r)}$ back into the input space to get rules in terms of the sparse linear combinations.
            \end{itemize}
            \item Actualize $r$ to $r+1$.
        \end{itemize}
        \item Collect the rules from all partitions to get a global rule $\mathcal R=\bigcup_{r=1}^RR_\rho^{(r)}$ giving a full description of the estimated bumps in the entire input space.
    \end{itemize}
\end{algorithm}

\subsection{Principal Components}\label{SecPRIMPC}

The theory about PCA is widely known, however we will oultine it here for the sake of completeness and to define notation. Among others, \citet{Mardia1976} presents a thorough analysis.

If $\textbf x$ is a random centered vector with covariance matrix $\Sigma$, we can define a linear transformation $\textbf T$ such that

\begin{align}\label{PopPCA}
	\textbf T\textbf x = \textbf y = \Gamma'\textbf x,
\end{align}
where $\Gamma$ is a matrix such that its columns are the standardized eigenvectors of $\Sigma:=\Gamma\Lambda\Gamma'$; $\Lambda$ is a diagonal matrix with $\lambda_1\geq\cdots\lambda_p\geq0$; and $\lambda_j$, $j=1,\ldots,p$, are the eigenvalues of $\Sigma$. Then $\textbf T$ is called the principal components transformation.

Let $p'\leq p$. We call $\mathfrak X(p)$  the original $p$-dimensional space where $\textbf x$ lives, $\mathfrak X'(p)$ the rotated $p$-dimensional space where $\textbf y$ lives, and $\mathfrak X'(p')$ the rotated and projected space on the $p'$ first PC's.

As we will explain later, we are not advising on the reduction of dimensionality in the context of regression or other learning settings. However, since it is relevant to some features of our simulations, we consider the case $\mathfrak X'(p')$ with $p'\leq p$.

\section{fastPRIM: a More Efficient Approach to mode hunting}\label{SecfastPRIM}

Despite successful applications in many fields, PRIM presents some shortcomings. For instance, \citet{FriedmanFisher1999}, the proponents of the algorithm, show that in the presence of high collinearity or high correlation PRIM is likely to behave poorly. This is also true when there is significant background noise. Further, PRIM becomes computationally expensive in simulations and real data sets in large dimensions. In this section we propose a modified version of PRIM, called ``fastPRIM'', aimed to solve these two problems when we are hunting the mode. The high collinearity problem can be solved via principal components. The computational problems can be solved via the CLT and the geometric properties of the normal distribution, if we can warrant $n\gg0$.

The following situations are variations from simple to complex of the input $X$ and the response $Z$ being normally distributed $N(\textbf 0,\Sigma)$ and $N(0,\sigma)$, respectively. We are interested on maximizing the density of $Z$ given $X$. But there are several ways to define the mode of a continuum distribution. So for simplicity, let us define the mode of $Z$ as the region $C\subset\mathbb R^p$ with $P_X[x\in C]=\beta$ that maximizes
\begin{equation}\label{Mode}
	M(C):=\int_Cf_Z(x)dF(x)
\end{equation}
(note the similarity of $M(C)$ with $I(C)$ in Equation (\ref{Goal})). In terms of PRIM, we are interested in the box $B^*_\beta$ defined on Equation (\ref{OptBox}). That is, $B^*_\beta$ is a box such that $\textbf P_X[x\in B^*_\beta]=\beta$, and inside it the mean density of the response $Z$ is maximized. Then, since the mean and the mode of the normal distribution coincide, finding a box of size $\beta$ centered around the mean of $X$ is equivalent to finding a box that maximizes the mode of $Z$ (since $X$ and $Z$ are both centered around the origin).

Although it is good to have explicit knowledge of our final region of interest, on what follows most of the results ---with the exception of Theorem \ref{main} below--- can be stated without direct reference to the mode of $Z$, taking into account that the mode of $Z$ is centered around the mean of $X$. 

\subsection{fastPRIM for Standard Normality}\label{SecPRIMStandard}

Let $X\sim N(\textbf 0,\textbf I)$ with $X$ living in the space $S(X)$. Let $Z\sim N(0,1)$. Since the whole input space is defined by symmetric uncorrelated variables, PRIM can be modified in a very efficient way. (See below Algorithm \ref{AlgofastPRIM}.)

\begin{algorithm}[!ht]
    \caption{fastPRIM with Standard Normal Predictors}
    \label{AlgofastPRIM}
    \begin{itemize}
        \item (Peeling) Instead of peeling just one side of probability $\alpha$, make $2p$ peels corresponding to each side of the box, giving to each one a probability $\alpha(2p)^{-1}$. Then, after $L$ steps, the remaining box has the same $\beta$ measure, it is still centered at the origin and its marginals will have probability measure $\beta^{1/p}$.
        \item (Covering) Call $B_M(k)$ the box found after the $k$-th step, $k=1,\ldots,t$ of this modified peeling stage. Setting $S(X)=S^{(1)}(X)$, take the space $S^{(k)}(X) := S^{(1)}(X) \setminus \bigcup_{1 \leq b \leq k-1} B_M(b)$ and repeat on it the peeling stage.
    \end{itemize}
\end{algorithm}

Several comments are worthy to mention related to this modification.

\begin{enumerate}
    \item Given that the standard normal is spherical, the final box at the end of the peeling algorithm is centered. It is also squared in that all its marginals have the same Lebesgue measure and the same probability measure $\beta^{1/p}$. Then, instead of doing the whole peeling stage, we can reduce it to select the central box whose vertices are located at the coordinates corresponding to the quantiles $\frac{1}{2}\beta^{1/p}$ and $1-\frac{1}{2}\beta^{1/p}$ of each marginal.
    \item Say we want to apply $t$ steps of covering. Since the boxes chosen are centered at the end of the $t$-th covering step, the final box will have probability measure $\beta_T:=1-(1-\beta)^t$ (which, by Remark \ref{FinalBoxProbability}, produces the same probability than PRIM), each marginal has measure ($\beta_T)^{1/p}$, and the vertices of each marginal are located at the coordinates corresponding to the quantiles $\frac{1}{2}(\beta_T)^{1/p}$ and $1-\frac{1}{2}(\beta_T)^{1/p}$. It means that the whole fastPRIM is reduced to calculating this central box of probability measure $t\beta$.
    \item The only non-zero values outside the diagonal in the covariance matrix of $(Z\; X)^T$ of size $(p+1)\times(p+1)$ are possibly the non-diagonal terms in the first row and the first column. Let us call them $\sigma_{ZX_1},\ldots,\sigma_{ZX_p}$. From this we get that $\textbf E[Z|X]=\sum_{j=1}^p\sigma_{ZX_j}X_i$ and $\textbf V[Z|X]=1-\sum_{j=1}^p\sigma_{ZX_j}^2$.
    \item It does not make too much sense to have a pasting stage, since we will be adding the same $\alpha$ we just peeled in portions of $\alpha/(2p)$ at each side. However, a possible way to add this whole stage is to look for the dimension that maximizes the conditional mean, once a portion of probability $\alpha/2$ have been added to each side of the selected dimension. All this, of course, provided that this maximal conditional mean be higher than the one already found during the peeling stage. If this stage is applied as described, the final region will be a rectangular centered box.
\end{enumerate}

Points 1, 2 and 3 can be stated as follows:

\begin{lem}\label{PRIMStandardLemma}
    Assume $Z \sim N(0,1)$ and $X \sim N(\textbf 0, \textbf I)$. Let us iterate $t$ times Algorithm \ref{AlgofastPRIM}. Then the whole algorithm can be reduced to a single stage of finding a centralized box with vertices located at the coordinates corresponding to the quantiles $\frac{1}{2}(\beta_T)^{1/p}$ and $1-\frac{1}{2}(\beta_T)^{1/p}$ of each of the $p$ variables.
\end{lem}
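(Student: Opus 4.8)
The plan is to exploit two structural features of $N(\mathbf 0,\mathbf I)$---invariance under each coordinate reflection $X_j\mapsto -X_j$ and under coordinate permutations, together with the independence of the components---to show that the iterated peeling and covering of Algorithm \ref{AlgofastPRIM} are \emph{forced} to output a single centered box whose marginal intervals are all identical. Once that rigid structure is in hand, locating the box reduces to one scalar equation solved with the Gaussian quantile function, which is exactly the ``single stage'' claimed. A key simplification to record at the outset is that the peeling of fastPRIM never consults $Z$: at every iteration it removes the same conditional mass $\alpha/(2p)$ from each of the $2p$ faces, so the whole box evolution is deterministic and is purely a statement about the law of $X$.

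First I would treat a single peeling stage (one covering step). By induction on the peeling index $l$ I would show that $B_l$ is (i) symmetric about the origin and (ii) a Cartesian product $I^{\times p}$ of one common interval $I$. The base case $B_1=\mathbb R^p$ is trivial. For the inductive step, removing equal mass from the two opposite faces of each coordinate preserves reflection symmetry, hence centering; removing the \emph{same} mass from every coordinate, together with the exchangeability of the iid components, preserves the product structure and keeps the marginals identical. Since the stage terminates with total retained mass $\beta$ and the components are independent, the box factorizes as $F(B)=q^{p}$ with $q$ the common marginal mass, forcing $q=\beta^{1/p}$. Thus the peeling stage produces precisely a centered box $[-c,c]^{\times p}$ whose marginal has mass $\beta^{1/p}$; this establishes the structural content of Point 1, the endpoints being computed below.

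Next I would handle the $t$ covering steps and show that they glue into one box. The covering step removes the centered box $B_M(1)=[-c_1,c_1]^{\times p}$ and re-peels the complement $S^{(2)}=\mathbb R^p\setminus B_M(1)$, which is itself reflection- and permutation-invariant. Running the same induction on this centrally symmetric region shows the next retained set is a centered \emph{shell} $B_M(2)=[-c_2,c_2]^{\times p}\setminus[-c_1,c_1]^{\times p}$ with $c_2>c_1$, so that $B_M(1)\cup B_M(2)=[-c_2,c_2]^{\times p}$ is again a centered box. Iterating, $\bigcup_{k=1}^{t}B_M(k)=[-c_t,c_t]^{\times p}$ is a single centered box, and by Remark \ref{FinalBoxProbability} its total mass is $\beta_T=1-(1-\beta)^{t}$. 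Independence then gives the common marginal mass $(\beta_T)^{1/p}$, which is Point 2; the same independence of the coordinates underlies the conditional-moment identities of Point 3. This reduces the entire iterated Algorithm \ref{AlgofastPRIM} to exhibiting the centered box $[-c_t,c_t]^{\times p}$.

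Finally, having pinned the output to a centered product box with marginal mass $(\beta_T)^{1/p}$, I would solve the one-dimensional equation for the symmetric interval $[-c_t,c_t]$ of that Gaussian mass; inverting the standard normal distribution function places its two endpoints at the marginal quantiles recorded in the statement, uniformly over the $p$ identical coordinates. I expect the main obstacle to be the covering-stage gluing: the per-step peeling slabs overlap at the corners, so naive additivity of the removed mass fails, and the clean bookkeeping must instead be carried through the marginal/product representation, proving by induction that the successive symmetric shells nest \emph{exactly} into one centered box of the mass prescribed by Remark \ref{FinalBoxProbability}. The peeling-stage centering and the final scalar quantile inversion are routine once that nesting is secured.
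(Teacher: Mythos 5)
Your argument is correct and takes essentially the same route as the paper: the paper's ``proof'' is nothing more than the informal observations in points 1--3 preceding the lemma (sphericity of $N(\mathbf 0,\mathbf I)$ forces the peeled box to be centered and square with marginal mass $\beta^{1/p}$, and the $t$ covering shells nest into one centered box of mass $\beta_T=1-(1-\beta)^t$), which you simply formalize by induction on the peeling and covering indices. Your extra care about the corner-overlap of the $2p$ slabs is a genuine point the paper glosses over, though note that both you and the paper label the endpoints of a symmetric interval of marginal mass $q=(\beta_T)^{1/p}$ as the quantiles $\tfrac{1}{2}q$ and $1-\tfrac{1}{2}q$, when a direct computation gives $\tfrac{1}{2}(1-q)$ and $\tfrac{1}{2}(1+q)$.
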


\subsection{fastPRIM and Principal Components}\label{fastPRIMPC}

Note that if $Z \sim N(\mu,\sigma^2)$ and $X \sim N(\textbf 0, \Sigma)$, the same algorithm as in Section \ref{SecfastPRIM} can be used. The only difference is that the final box will be a rectangular Lebesgue set, not necessarily a square as before (although it continues being a square in probability). Some comments are in order.

First, with each of the variables having possible different variances, we are also peeling the random variables with lower variance. That is, we are peeling precisely the variables that we do not want to touch. The whole idea behind PRIM, however, is to peel from the variables with high variance, leaving the ones with lower variance as untouched as possible. The obvious solution is to use a PCA to project on the variables with higher variance, peel on those variables, and after the box is obtained to add the whole set of variables we chose not to touch. Adding to the notation developed in Section \ref{SecPRIMPC} for PCA, call $Y'$ the projection of $Y$ to its firsts $p'$ principal components, where $0 < p'\leq p$. Algorithm \ref{AlgofastPRIMPCA} below makes this explicit.

\begin{algorithm}[!ht]
    \caption{fastPRIM with Principal Components}
    \label{AlgofastPRIMPCA}
    \begin{itemize}
        \item (PCA) Apply PCA to $X$ to obtain the space $\mathfrak X'(p')$.
        \item (Peeling) Make $2p'$ peels corresponding to each side of the box, each one with probability $\alpha(2p')^{-1}$. After $L$ steps, the centered 		box has $\beta$ measure, and its marginals will have probability $\beta^{1/p'}$ each.
        \item (Covering) Call $B_M(k)$ the box found after the $k$-th step, $k=1,\ldots,t$, of this modified peeling stage. Setting $S(Y')=S^{(1)}(Y')$, take 		the space $S^{(k)}(Y') := S^{(1)}(Y') \setminus \bigcup_{1 \leq b \leq k-1} B_M(b)$ and repeat on it the peeling stage.
        \item (Completing) The final box will be given by $\left[\mathfrak X'(p)\setminus\mathfrak X'(p')\right]\cup S^{(t)}(Y')$. That is, to the final box we are 		adding the whole subspace which we chose not to peel.
    \end{itemize}
\end{algorithm}

In this way, we avoid to select for peeling the variables with lower variance. Concededly, we are still peeling the same amount (we are getting squares, not rectangles, in probability), but we are also getting an important simplification in algorithmic complexity cost. Besides this fact, most of the comments in Section \ref{SecPRIMStandard} are still valid but one clarification has to be made: The covariance matrix of $(Z\; Y')$ has size $(p'+1)\times(p'+1)$; as before, all the non-diagonal elements are zero, except possibly the ones in the first row and the first column. Call $\sigma_{ZY_1'}, \ldots,\sigma_{ZY_p'}$. Then $\textbf E[Z|Y']=\sum_{j=1}^{p'}\sigma_{ZY_j'}\lambda_j^{-1}Y_j'$ and $\textbf {Var}[Z|Y']=\sigma_Z^2-\sum_{j=1}^{p'}\lambda_j^{-1}\sigma_{ZY_j'}^2$, where $Y_j'$ is the $j$-th component of the random vector $Y'$.

As before, we can state the following lemma:

\begin{lem}
	Assume $Z \sim N(\mu,\sigma^2)$ and $X \sim N(\textbf 0,\Sigma)$. Iterate $t$ times the covering stage of Algorithm \ref{AlgofastPRIMPCA}. 		Then the whole algorithm can be reduced to a two-stage setting: First,  to find a centralized box with vertices located at the coordinates 			corresponding to the quantiles $\frac{1}{2}(\beta_T)^{1/p'}$ and $1-\frac{1}{2}(\beta_T)^{1/p'}$ of each of the $p'$ variables. Second, add the $p-	p'$ dimensions left untouched to the final box.
\end{lem}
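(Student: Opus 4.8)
The plan is to reduce this statement to its standard-normal counterpart, Lemma \ref{PRIMStandardLemma}, by exploiting the fact that the principal components transformation converts the correlated Gaussian $X$ into independent Gaussian coordinates. Concretely, I would first apply the transformation $Y = \Gamma' X$ of Equation (\ref{PopPCA}), so that $Y \sim N(\textbf{0}, \Lambda)$ with $\Lambda$ diagonal. Because $X$ is Gaussian and the coordinates of $Y$ are uncorrelated, they are in fact independent, and each $Y_j \sim N(0, \lambda_j)$ is symmetric about the origin. Restricting to the first $p'$ coordinates gives the projected vector $Y'$, whose joint law still factors as a product of symmetric marginals. This is the only structural input the argument needs: independence and marginal symmetry, not identical variances.

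Second, I would establish the ``square in probability'' property of the peeled box. Since the joint law of $Y'$ factors, $F(B \mid S) = \prod_{j=1}^{p'} F_j(B_j \mid S)$ for any axis-aligned box $B$. The peeling stage of Algorithm \ref{AlgofastPRIMPCA} removes equal probability $\alpha(2p')^{-1}$ from each of the $2p'$ faces at every step; combined with the symmetry $F_j(-a) = 1 - F_j(a)$ of each centered normal marginal, this keeps the current box centered at the origin and forces all $p'$ marginals to carry the same conditional probability at every step. Hence when the box reaches total measure $\beta$, each marginal carries probability $\beta^{1/p'}$, and by centering its two faces sit at the symmetric quantiles $\frac12 \beta^{1/p'}$ and $1 - \frac12 \beta^{1/p'}$. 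The unequal eigenvalues $\lambda_j$ only rescale the Lebesgue extent of each side, turning the Lebesgue square of Lemma \ref{PRIMStandardLemma} into a rectangle while leaving the probability picture unchanged.

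Third, I would handle the covering iteration exactly as in the standard case. Because each $B_M(k)$ is centered and the removed region $\bigcup_{b<k} B_M(b)$ is a centered box, the set $S^{(k)}(Y')$ on which we re-peel is a centered annular region, so the next peeled box is again a centered shell; the nested union $\bigcup_{k=1}^{t} B_M(k)$ is therefore itself a centered box. Its total probability is $\sum_{k=1}^{t} \beta(1-\beta)^{k-1} = \beta_T$ by Remark \ref{FinalBoxProbability}, whence each marginal of this union box carries probability $(\beta_T)^{1/p'}$ with faces at the quantiles $\frac12 (\beta_T)^{1/p'}$ and $1 - \frac12 (\beta_T)^{1/p'}$. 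This is precisely the first (centralized-box) stage of the claimed two-stage reduction. The completing stage of Algorithm \ref{AlgofastPRIMPCA} then forms $[\mathfrak X'(p) \setminus \mathfrak X'(p')] \cup S^{(t)}(Y')$, i.e.\ it reattaches the $p - p'$ untouched dimensions, which is the second stage.

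The main obstacle I anticipate is the careful verification that equal-probability peeling forces all $p'$ marginals to carry identical probability and to remain symmetric at every step --- i.e.\ the ``square in probability'' claim --- since this is what lets the $p'$-dimensional measure factor as $(\beta_T)^{1/p'}$ per coordinate. Once the product-and-symmetry bookkeeping is in place, showing that the covering union is a centered box of probability $\beta_T$ is routine, and the reduction follows the lines of Lemma \ref{PRIMStandardLemma} almost verbatim, the only genuinely new ingredient being the PCA-induced independence that makes the factorization legitimate in the first place.
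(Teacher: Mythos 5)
Your proposal is correct and follows essentially the same route the paper takes: the paper states this lemma without a formal proof, treating it as an immediate consequence of the discussion in Sections 3.1--3.2 (the PC rotation yields independent symmetric Gaussian coordinates, so the peeled box is centered and ``square in probability,'' the covering union is a centered box of measure $\beta_T$, and the completing stage reattaches the $p-p'$ untouched dimensions). Your write-up simply makes explicit the factorization-plus-symmetry bookkeeping that the paper leaves implicit.
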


\begin{remark}
	Even though we have developed the algorithm with $p'\leq p$, it is not wise to try to reduce the dimensions of the input. To be sure, the rotation of 	the input in the direction of the principal components is a useful thing to do in learning settings, as \citet{DiazRaoDazard2014} have showed. 		However, \citet{Cox1968}, \citet{HadiLing1998}, and \citet{Joliffe1982}, have warned against the reduction of dimensionality.
\end{remark}

\subsection{fastPRIM and Data}\label{fastPRIMdata}

The usefulness of the previous result can be more easily seen when, for relatively large $n$, we consider the iid vectors $X_1,\ldots,X_n$ with finite second moment, since in this way we can approximate to a normal distribution by the Multivariate Central Limit Theorem:

Call $X=[X_1\cdots X_n] $ and let us assume that $n\gg 0$. By the multivariate central limit theorem, if the vectors of observations are iid, such that their distribution has mean $\mu_X$ and variance $\Sigma_X$, we can approximate $ X^* := n^{1/2}\left(\overline X-\mu_X\right)$ to a $p$-variate normal distribution with parameters $\textbf 0$ and $\Sigma_X$. That is,  $\overline X$ can be approximated to a distribution $N(\mu,(1/n)\Sigma_X)$. Now, $Y^*=X^*G$ is the PC transformation of $X^*$, where $G$ is the matrix of eigenvectors of $S$, the sample covariance matrix of $X^*$; i.e., $S = GLG^T$, and $L$ is the diagonal matrix of eigenvalues of $S$, with $l_{j^{'}}\geq l_j$ for all $j^{'}<j$.

As before, call $Y'$ the projection of $Y$ to its firsts $p'$ principal components. Apply Algorithm \ref{AlgofastPRIMPCA}.

Note that the use of the CLT is indeed well justified: since the asymptotic mean of $X^*$ is $\textbf 0$, its asymptotic mode is also at $\textbf 0$ (or around $\textbf 0$).

\subsection{Graphical Illustrations}\label{Graphical}

In the following simulations, we first test PRIM and fastPRIM and illustrate graphically how fastPRIM compares to PRIM either in the input space $\mathfrak X(p)$ or in the PC space $\mathfrak X'(p)$. We generated a synthetic dataset derived from a simulation setup similar to the one used in Section \ref{Intro}, although with a single target distribution and a continuous normal response, without noise. Thus, the data $X$ was simulated as $X \sim N_p(0,\Sigma)$ with response $Z \sim N(\mu,\sigma^2)$. To control the amount of variance for each input variable and their correlations, the sample covariance matrix $\Sigma$ was constructed from a specified sample correlation matrix $R$ and sample variance matrix $V$ such that $\Sigma := V^{\sfrac{1}{2}} R V^{\sfrac{1}{2}}$, after ensuring that the resulting matrix $\Sigma$ is symmetric positive definite.

Simulations were carried out with a continuous normal response with parameters $\mu = 1$ and $\sigma = 0.2$, a fixed sample size $n = 10^3$, and no added noise (i.e. mixing weight $w = 1$). Here, we limited ourselves to a low dimensional space ($p = p' = 2$) for graphical visualization purposes. Simulations were for a fixed peeling quantile $\alpha$, a fixed minimal box support $\beta$, a fixed maximal coverage parameter $t$, and no pasting for PRIM. Empirical results presented in Figure \ref{figure02} show the marked computational efficiency of fastPRIM compared to PRIM. CPU times are plotted against PRIM and fastPRIM coverage parameters $k \in \{1, \ldots, t\}$ and $t \in \{1, \ldots, 20\}$, respectively, in the original input space $\mathfrak X(2)$ and PC space $\mathfrak X'(2)$.

\begin{figure}[!hbt]
	\centering
    	\includegraphics[width=0.8\textwidth]{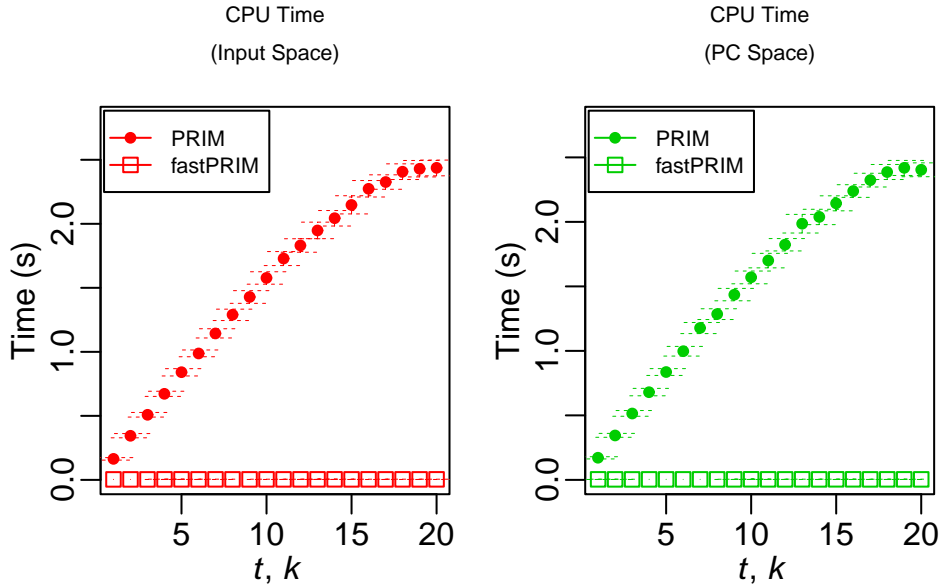}
    	\vskip -10pt
    	\caption{Total CPU time as a function of coverage. For all plots, comparison of speed metrics are reported against coverage parameter $k \in \{1, 		\ldots, t\}$ for PRIM and coverage parameter $t \in \{1, \ldots, 20\}$ for fastPRIM, in the original input space $\mathfrak X(2)$ (left), and the 		PC space $\mathfrak X'(2)$ (right) for each algorithm. Total CPU time in seconds (s). Mean estimates and standard errors of the means are 		reported after generating 128 Monte-Carlo replicates.}
    	\label{figure02}
\end{figure}

Further, empirical results presented in Figure \ref{figure03} show PRIM and fastPRIM box coverage sequences as a function of PRIM and fastPRIM coverage parameters $k \in \{1, \ldots, t\}$ and $t \in \{1, \ldots, 20\}$, respectively. Notice the centering and nesting of the series of fastPRIM boxes in contrast to the sequence of boxes induced by PRIM (Figure \ref{figure03}).

\begin{figure}[!hbt]
	\centering
    	\includegraphics[width=1.0\textwidth]{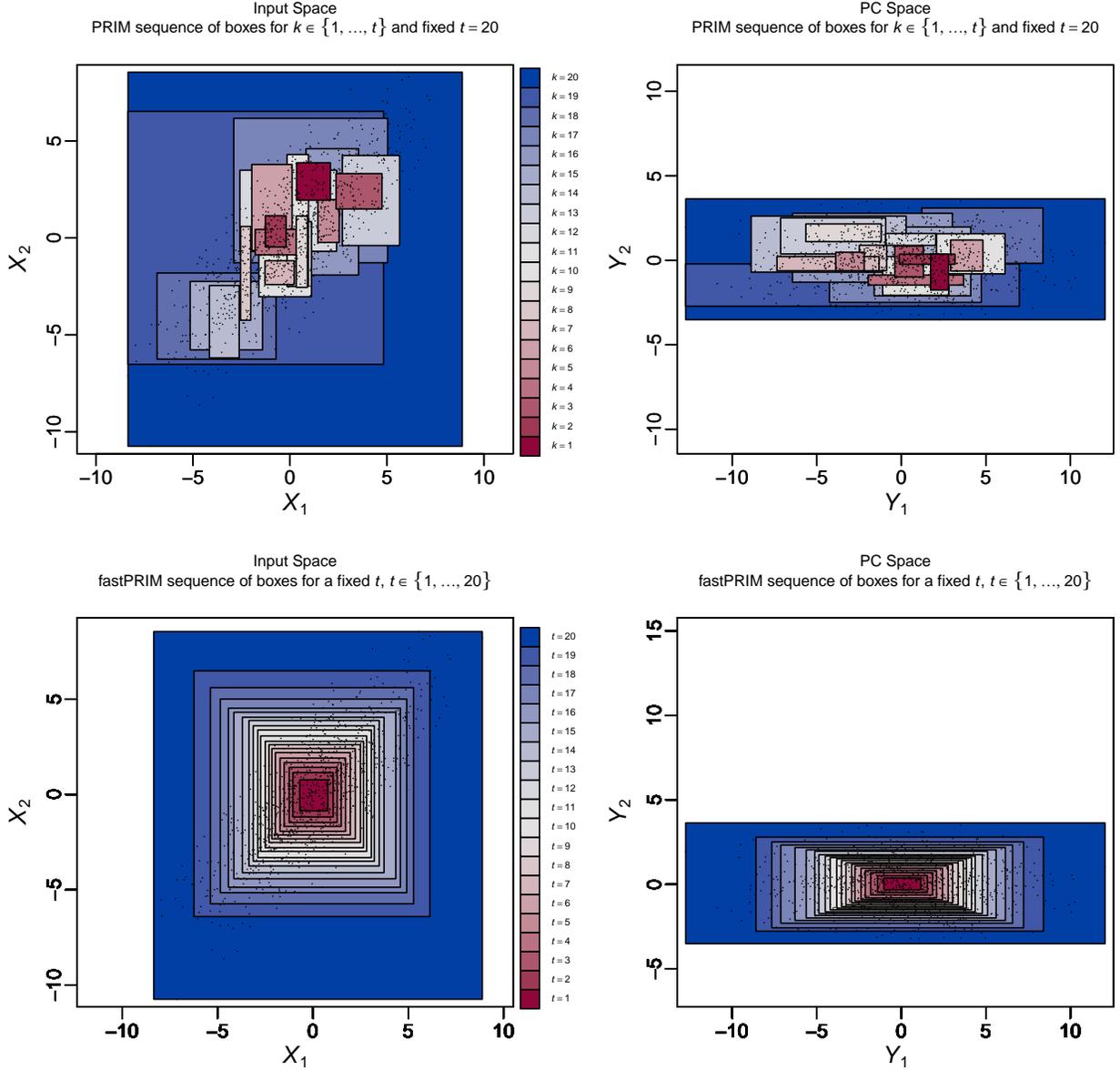}
    	\vskip -10pt
    	\caption{PRIM and fastPRIM box coverage sequences. Top row: PRIM complete sequence of coverage boxes, each corresponding to a coverage 		step $k \in \{1, \ldots, t\}$ with a fixed peeling quantile $\alpha = 0.05$, and a fixed maximal coverage parameter $t =20$, corresponding to a 		fixed minimal box support $\beta = 0.05$. Bottom row: fastPRIM complete sequence of coverage boxes, each corresponding to a fixed 			coverage parameter $t \in \{1, \ldots, 20\}$, with a fixed $\beta = 0.05$. Results are given in the input space $\mathfrak X(2)$ (left) and in the 		PC space $\mathfrak X'(2)$ (right). The red to blue palette corresponds to a range of box output means from the largest to 	the smallest, 		respectively.}
    \label{figure03}
\end{figure}
\vskip 0pt

\section{Comparison of the Algorithms in the Input and PC Spaces}\label{SpacesCompared}

The greatest theoretical advantage of fastPRIM is that, because of the centrality of the boxes, it gives us a framework to compare the output mean in the original input space and in the PC space, something that cannot be attained with the original PRIM algorithm in which the behaviour of the final region is unknown (see Figure \ref{figure02}). \citet{PolonikWang2010} explain how PRIM tries to approximate regression level curves, an objective that the algorithm does not accomplish in general. With the idea of level curves in mind, it is clear that the bump of a multivariate normal distribution can be seen as the data inside the ellipsoids of concentration. This concept is the key to prove the optimality of the box found on the PC space. By optimality here we mean the box with minimal Lebesgue measure among all possible central boxes found by fastPRIM with probability measure $\beta$.

\begin{lem}\label{Circum}
    Let $E$ be a $p$-dimensional ellipsoid. The rectangular box that is circumscribing $E$ (i.e. centered at the center of $E$, with sides parallel to the axes of $E$, such that each of its edges is of length equal to the axis length of $E$ in the corresponding dimension), is the box with the minimal volume of all the rectangular boxes containing $E$.
\end{lem}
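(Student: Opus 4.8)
The plan is to reduce the problem to a clean optimization over the \emph{orientation} of the containing box and then recognize the resulting inequality as Hadamard's determinant inequality. First I would normalize: place the center of $E$ at the origin and choose coordinates along the principal axes of $E$, so that $E=\{x:x^{T}Ax\le 1\}$ with $A=\mathrm{diag}(a_1^{-2},\dots,a_p^{-2})$ and $a_i$ the semi-axis lengths. In these coordinates the circumscribing box is $\prod_i[-a_i,a_i]$, with volume $2^{p}\prod_i a_i$, so the goal becomes showing that no rectangular box containing $E$ has smaller volume. Note that the interesting content is over \emph{all} orientations of the box (axis-parallel boxes are then just the special case), which is exactly what makes the PC rotation meaningful: in the input space the box is forced to align with the ambient axes rather than with those of $E$.

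Next I would parametrize an arbitrary rectangular box by an orthonormal frame (its orientation) $Q\in O(p)$, a center, and its half-widths. The key simplifying observation is that for a \emph{fixed} orientation $Q$ the smallest box containing $E$ is forced: because $E$ is centrally symmetric, the minimal such box is centered at the origin, and its half-width in the direction of the $j$-th column $q_j$ of $Q$ equals the support half-width of $E$ in that direction, $h_E(q_j)=\sqrt{q_j^{T}A^{-1}q_j}$. Indeed, containing $E$ requires the slab along $q_j$ to have width at least $2h_E(q_j)$, with equality for the tangent (support) box. Hence the minimal volume achievable with orientation $Q$ is $V(Q)=2^{p}\prod_{j=1}^{p}\bigl(q_j^{T}A^{-1}q_j\bigr)^{1/2}$, and the whole problem collapses to minimizing $V(Q)$ over $Q\in O(p)$.

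Now I would recognize $q_j^{T}A^{-1}q_j=(Q^{T}A^{-1}Q)_{jj}=:B_{jj}$, where $B=Q^{T}A^{-1}Q$ is symmetric positive definite with the same eigenvalues as $A^{-1}$, namely $a_1^{2},\dots,a_p^{2}$, so that $\det B=\prod_i a_i^{2}$. The volume is therefore $V(Q)=2^{p}\bigl(\prod_j B_{jj}\bigr)^{1/2}$, and Hadamard's inequality for positive definite matrices, $\det B\le \prod_j B_{jj}$, yields $V(Q)\ge 2^{p}(\det B)^{1/2}=2^{p}\prod_i a_i$; that is, no orientation beats the circumscribing box. The equality case of Hadamard's inequality, which holds iff $B$ is diagonal, pins down the minimizers: $Q^{T}A^{-1}Q$ is diagonal exactly when $Q$ carries the coordinate axes onto the axes of $E$, i.e. when the box is the circumscribing box (up to relabeling and sign of the axes).

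The routine parts are the support-function computation of the half-widths and the bookkeeping around the orthogonal conjugation $B=Q^{T}A^{-1}Q$. The conceptual crux, and the step I expect to be the main obstacle, is casting the orientation optimization in a form where a single clean inequality settles it: once $V(Q)$ is expressed through the diagonal entries of $Q^{T}A^{-1}Q$, invoking Hadamard's inequality (rather than a more painful Lagrange-multiplier or majorization argument) makes both the bound and its equality characterization fall out at once. A secondary point worth stating explicitly is the reduction that, for each fixed orientation, the minimal box is the centered support box; this is what lets me optimize over the finite-dimensional parameter $Q$ instead of over all boxes simultaneously.
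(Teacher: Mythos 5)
The paper gives no proof of this lemma --- it is explicitly omitted as well-known --- so there is no argument of the authors' to compare yours against. Your proof is correct and complete: the reduction to the centered support box for a fixed orientation $Q$ (half-width $h_E(q_j)=\sqrt{q_j^{T}A^{-1}q_j}$ in each slab direction), the identification $V(Q)=2^{p}\prod_j\bigl((Q^{T}A^{-1}Q)_{jj}\bigr)^{1/2}$, and the application of Hadamard's inequality $\det B\le\prod_j B_{jj}$ with its equality case all check out, and your formulation over arbitrary orientations is exactly the strength needed for the paper's Proposition 1, which compares boxes across all rotations. The only point worth flagging is the equality characterization: when some semi-axes coincide (e.g.\ $E$ a ball), $Q^{T}A^{-1}Q$ can be diagonal for $Q$ other than a signed permutation, so the minimizing box is not unique; the lemma's claim that the circumscribing box \emph{attains} the minimum is unaffected.
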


The proof of Lemma \ref{Circum} is well-known and is omitted here.

\begin{prop}\label{minBox}
	Let $X \sim N(\textbf 0,\Sigma)$. Assume that the true bump $E$ of $X$ has probability measure $\beta' > 0$. Then, it is possible to find a 		rectangular box $R$ by fastPRIM that circumscribes $E$ under the PC rotation with minimal Lebesgue measure over all rectangular boxes 		containing $E$ and the set of all possible rotations.
\end{prop}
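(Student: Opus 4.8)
The plan is to collapse the statement into a single optimization over orthonormal rotations and then identify the PC frame as its unique solution. First I would make the bump explicit. Since $X \sim N(\textbf{0},\Sigma)$, the concentration set of probability $\beta'$ is the ellipsoid $E = \{x : x^{T}\Sigma^{-1}x \le c\}$ with $c = F_{\chi^{2}_{p}}^{-1}(\beta')$, so that $F(E)=\beta'$ by the usual quadratic-form characterization of Gaussian level sets. Under the transformation $\textbf y = \Gamma'\textbf x$ of Section \ref{SecPRIMPC}, the quadratic form diagonalizes and $E$ becomes the axis-aligned ellipsoid $\{y : \sum_{j=1}^{p} y_{j}^{2}/\lambda_{j} \le c\}$, whose semi-axes have lengths $\sqrt{c\lambda_{j}}$ along the coordinate directions; equivalently, the components $Y_{j}$ are independent $N(0,\lambda_{j})$.

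Next I would verify that fastPRIM in the rotated space can return \emph{exactly} the box circumscribing $E$. By the square-in-probability property of Algorithm \ref{AlgofastPRIMPCA}, the output is a centered box whose $j$-th marginal is the central interval $[-a_{j},a_{j}]$ of probability $\beta_{T}^{1/p}$, so that $a_{j}=\sqrt{\lambda_{j}}\,\Phi^{-1}\big(\tfrac{1+\beta_{T}^{1/p}}{2}\big)$, with $\Phi$ the standard normal distribution function. Choosing the coverage so that $\beta_{T}=(2\Phi(\sqrt c)-1)^{p}$ (for instance $t=1$ with $\beta=(2\Phi(\sqrt c)-1)^{p}$) forces $a_{j}=\sqrt{c\lambda_{j}}$ simultaneously for every $j$, which are precisely the semi-axis lengths of $E$. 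Hence the fastPRIM box $R$ is the circumscribing box of $E$ in the sense of Lemma \ref{Circum}, and by that lemma it has minimal volume among all boxes with sides parallel to the PC axes that contain $E$. A direct computation gives its volume as $\prod_{j=1}^{p}2\sqrt{c\lambda_{j}}=(2\sqrt c)^{p}\sqrt{\det\Sigma}$.

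It then remains to beat every other rotation, which is the crux of the argument. For an arbitrary orthonormal frame $Q=[q_{1}\cdots q_{p}]$, the smallest box with faces normal to the $q_{i}$ that contains $E$ has half-widths equal to the support-function values $h_{E}(q_{i})=\sqrt{c}\,\sqrt{q_{i}^{T}\Sigma q_{i}}$, so its volume is $(2\sqrt c)^{p}\prod_{i=1}^{p}\sqrt{q_{i}^{T}\Sigma q_{i}}$. Thus minimizing over rotations amounts to minimizing $\prod_{i}(q_{i}^{T}\Sigma q_{i})=\prod_{i}(Q^{T}\Sigma Q)_{ii}$ over orthonormal $Q$. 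Here Hadamard's inequality is the decisive tool: applied to the positive definite matrix $Q^{T}\Sigma Q$ it yields $\prod_{i}(Q^{T}\Sigma Q)_{ii}\ge \det(Q^{T}\Sigma Q)=\det\Sigma=\prod_{j}\lambda_{j}$, with equality if and only if $Q^{T}\Sigma Q$ is diagonal, that is, exactly when $Q=\Gamma$ is the PC rotation (up to permutation and sign). Therefore the global minimum over all rotations equals $(2\sqrt c)^{p}\sqrt{\det\Sigma}$ and is attained at the PC frame, matching the volume of $R$ computed above.

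The main obstacle is this last step: ruling out that some oblique rectangle beats the axis-aligned one. Everything before it is bookkeeping about Gaussian quantiles and the diagonalization of $\Sigma$, but the cross-orientation minimality genuinely requires an inequality, and I expect the cleanest route to be the support-function reduction followed by Hadamard's inequality (equivalently, a Schur--Horn majorization argument exploiting the Schur-concavity of the product of the diagonal entries of $Q^{T}\Sigma Q$). I would also take care to invoke the equality case of Hadamard, so as to conclude that the PC rotation is not merely optimal but essentially the unique optimizer; this is what ties the purely geometric optimum back to the concrete output of Algorithm \ref{AlgofastPRIMPCA} and completes the proof of Proposition \ref{minBox}.
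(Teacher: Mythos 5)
Your proof is correct and follows the same overall route as the paper's: identify the bump with the concentration ellipsoid of $X$, pass to the PC rotation where that ellipsoid is axis-aligned, observe that fastPRIM's centered square-in-probability box can be made to coincide with the circumscribing box, and then argue that no box in any other orientation can do better. The difference is in how the last step is handled. The paper simply invokes Lemma \ref{Circum} (minimality of the circumscribing box over \emph{all} rectangular boxes containing the ellipsoid) and declares its proof well-known and omitted; it also only asserts, without an explicit quantile computation, that a suitable per-direction choice of $\alpha$ lets fastPRIM approximate the circumscribing box. You actually supply both missing pieces: the explicit choice $\beta_T=(2\Phi(\sqrt c)-1)^p$ showing Algorithm \ref{AlgofastPRIMPCA} attains the circumscribing box exactly, and, more substantively, the support-function reduction plus Hadamard's inequality $\prod_i(Q^{T}\Sigma Q)_{ii}\geq\det\Sigma$, with its equality case, which proves the cross-rotation minimality and the essential uniqueness of the PC frame. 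In effect you have proved the paper's omitted Lemma \ref{Circum} in the generality actually needed (minimality over all orientations, not just over boxes parallel to the axes of $E$), which is the one point where the paper's own argument leans on an unproved claim; your version is the more self-contained of the two.
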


\begin{proof}
	The true bump satisfies that $\textbf P[x \in E]=\beta'$. This bump, by definition of normality, lives inside an ellipsoid of concentration $E$, of 		volume $\text{Vol}(E)=\pi_p\prod_{1\leq j\leq p}r_j$, where $r_j$ is the length of the semi axis of the dimension $j$ and $\pi_p$ is a constant that 	only depends on the dimension $p$. By Lemma \ref{Circum} above, the box $R$ with sides parallel to the axes of $E$, and circumscribing $E$, 	has minimal volume over all the boxes containing $E$ and its volume is $2^p\prod_{1\leq j\leq p}r_j$, and $2^p>\pi_p$. Let us assume that $\textbf 	P[x \in R] = \beta$ (thus $\beta' < \beta$).

	Note now that $R$ is parallel to the axes in the space of principal components $\mathfrak X'(p)$  and it is centered at its origin. Therefore, 		provided an appropriate small $\alpha$ (it is possible that we need to adjust proportionally $\alpha$ on each direction of the principal components 	to obtain the box that circumscribes $E$), the minimal rectangular box $R$ containing the bump $E$ can be approximated through 				fastPRIM and is in the direction of the principal components. As such, then the box $R$ has smaller Lebesgue measure than any other 			approximation in every other rotation.
\end{proof}

\begin{remark}
	The box of size $\beta$ circumscribing the ellipsoid of concentration $E$ is identical to $B^*_\beta$ in equation (\ref{OptBox}).
\end{remark}

Proposition \ref{minBox} allows us to compare box estimates in the PC space of PRIM (Figure \ref{figure02}, top-right) versus fastPRIM (Figure \ref{figure02}, down-right). Remember from Equation (\ref{FinalRegion}) that $R_\rho(p,1)$ is the box obtained with PRIM after a single stage of coverage. We now restrict ourselves to the case of $R_\rho(p,1)$ in the direction of the principal components (i.e., its sides are parallel to the axes of $\mathfrak X'(p)$). We establish the following result:

\begin{teo}\label{main}
 	Assume $X \sim N(0,\Sigma)$ and $Z \sim N(0,\sigma^2)$. Call $R$ the final fastPRIM box resulting from Algorithm \ref{AlgofastPRIMPCA} and 	assume $p'=p$. As in (\ref{FinalRegion}), call also $\hat R_\rho(p,1)$ the final box from Algorithm \ref{AlgoPRIM} after one stage of coverage. 		Assume that $R$ and $R_\rho(p,1)$ contain the true bump. Then
	\begin{align}
		\frac{M(R)}{Vol(R)}>\frac{M(R_\rho(p,1))}{Vol(R_\rho(p,1))},
	\end{align}
	that is, the volume-adjusted box output mean of the mode of $Z$ given $R$ is bigger than the volume-adjusted box output mean of the mode of $Z	$ given $R_\rho(p)$.
\end{teo}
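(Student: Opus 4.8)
The plan is to reduce the claimed inequality to a comparison of \emph{volume-averaged} integrand densities over two nested boxes and then to invoke the mediant inequality. Write $M(A)=\int_A g(x)\,dx$, where $g(x)\,dx:=f_Z(x)\,dF(x)$; since $X\sim N(\mathbf 0,\Sigma)$ and, by the discussion in Section \ref{SecfastPRIM}, the mode of $Z$ sits at the origin, the weight $g$ is a nonnegative, centrally symmetric, unimodal function whose level sets are ellipsoids concentric with the concentration ellipsoid $E$ and aligned with the principal-component axes; in particular $g$ is a strictly decreasing function of the Mahalanobis radius $\|x\|_\Sigma$. I would use only three facts: the value of $M$, the value of $Vol$, and this monotone ellipsoidal structure of $g$. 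Throughout I assume $R\neq\hat R_\rho(p,1)$ (otherwise the asserted strict inequality degenerates to an equality), which is the generic situation since the PRIM box obtained after a single coverage is not the minimal circumscribing box.

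First I would pin down the geometry. By Lemma \ref{Circum} and Proposition \ref{minBox}, the fastPRIM box $R$ is, in the PC frame, the axis-aligned bounding box of $E$, centered at the origin with half-width $r_j$ in coordinate $j$ equal to the semi-axis of $E$; thus $R=\prod_{j}[-r_j,r_j]$. Because $\hat R_\rho(p,1)$ has its sides parallel to the same PC axes and also contains $E$, each of its defining intervals must contain the projection $[-r_j,r_j]$ of $E$ onto axis $j$; since a box is the Cartesian product of these intervals, this forces $R\subseteq\hat R_\rho(p,1)$ regardless of whether the PRIM box is centered. Proposition \ref{minBox} simultaneously gives $Vol(R)\le Vol(\hat R_\rho(p,1))$, with strict inequality whenever the two boxes differ.

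The crux is the density comparison on the extra region. Set $D:=\hat R_\rho(p,1)\setminus R$, a disjoint decomposition giving $M(\hat R_\rho(p,1))=M(R)+M(D)$ and $Vol(\hat R_\rho(p,1))=Vol(R)+Vol(D)$. I would show
\begin{align}\label{avgcompare}
	\frac{M(R)}{Vol(R)}>\frac{M(D)}{Vol(D)}.
\end{align}
Since $E\subseteq R$, the shell $D$ lies entirely outside the concentration ellipsoid, so $g$ is below its $\partial E$-level throughout $D$, whereas $R$ contains the whole mode of $g$. The honest difficulty, and the step I expect to be the main obstacle, is that a box has corners: the infimum of $g$ over $R$ is attained at the corners of $R$ and may fall below the supremum of $g$ over $D$ (attained on the faces of $R$), so a crude $\inf_R g\ge\sup_D g$ estimate is false for $p\ge 2$. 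To obtain \eqref{avgcompare} cleanly I would reformulate it probabilistically: with $U_R$ and $U_D$ uniform on $R$ and $D$, one has $M(R)/Vol(R)=\mathbf E[g(U_R)]$ and $M(D)/Vol(D)=\mathbf E[g(U_D)]$, and since $g$ is a decreasing function of $\|x\|_\Sigma$ it suffices to prove that $\|U_R\|_\Sigma$ is stochastically smaller than $\|U_D\|_\Sigma$. Equivalently, via the layer-cake identity $M(A)=\int_0^\infty Vol\big(A\cap\{g>s\}\big)\,ds$, the super-level sets $\{g>s\}$ are nested, centered, PC-aligned ellipsoids that contribute to $R$ alone for every $s$ above the $\partial E$-level and leak into $D$ only at smaller $s$; comparing, level by level, the captured fraction relative to volume in the central box against that in the outer shell yields \eqref{avgcompare}. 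This stochastic-dominance estimate is where the real work lies.

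Finally I would assemble the pieces. Given \eqref{avgcompare} and the mediant inequality (if $a/b>c/d$ with $b,d>0$ then $a/b>(a+c)/(b+d)$), applied with $a=M(R)$, $b=Vol(R)$, $c=M(D)$, $d=Vol(D)$, we obtain
\begin{align}
	\frac{M(R)}{Vol(R)}>\frac{M(R)+M(D)}{Vol(R)+Vol(D)}=\frac{M(\hat R_\rho(p,1))}{Vol(\hat R_\rho(p,1))},
\end{align}
which is precisely the assertion of Theorem \ref{main}. Everything outside \eqref{avgcompare} is either geometry already supplied by Lemma \ref{Circum} and Proposition \ref{minBox} or the elementary mediant inequality, so the proof stands or falls on the radius stochastic-ordering argument flagged above.
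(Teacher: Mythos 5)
Your overall strategy is sound, and it is in fact the rigorous version of what the paper actually does. The paper's own proof argues: any axis-parallel box $R'$ containing the bump contains $R$; the points of $R'\setminus R$ have low density; ``therefore $M(R)>M(R')$''; and $Vol(R)<Vol(R')$; divide. As written the middle claim is false --- $M$ is the integral of a nonnegative integrand, so $R\subseteq R'$ forces $M(R)\le M(R')$ --- and the only way to make the paper's low-density observation do real work is precisely your route: split $M(R_\rho(p,1))=M(R)+M(D)$ and $Vol(R_\rho(p,1))=Vol(R)+Vol(D)$ with $D=R_\rho(p,1)\setminus R$ and apply the mediant inequality, reducing everything to $M(R)/Vol(R)>M(D)/Vol(D)$. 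You also correctly observe that the naive bound $\inf_R g\ge\sup_D g$ fails for $p\ge2$ because of the corners of $R$; the paper notices neither point.

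The genuine gap is that you stop exactly where the work starts. The inequality $M(R)/Vol(R)>M(D)/Vol(D)$, equivalently the stochastic ordering of $\|U_R\|_\Sigma$ against $\|U_D\|_\Sigma$, is asserted (``comparing, level by level, \ldots yields'' the claim) but never proved, and it is not a routine verification: in the layer-cake form you must show $Vol(R\cap E_s)/Vol(R)\ge Vol(D\cap E_s)/Vol(D)$ for every superlevel ellipsoid $E_s$, and for levels below $\partial E$ both fractions are strictly between $0$ and $1$ and the comparison is delicate (in high dimension the volume of $R$ concentrates near its corners, at Mahalanobis radius of order $\sqrt{p}$ times that of its faces, so $R$ is by no means uniformly ``closer in'' than $D$). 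A workable completion is to decompose $D$ into finitely many disjoint boxes, each displaced from $R$ in a single coordinate, and argue one dimension at a time: for an even function decreasing in $|t|$, its average over $[-r_j,r_j]$ dominates its value at every point outside that interval, with an induction over the remaining coordinates to handle the fact that the outer pieces also have enlarged cross-sections. You should also justify the structural assumption you lean on throughout, namely that $g$ defined by $g(x)\,dx=f_Z(x)\,dF(x)$ is a decreasing function of the single Mahalanobis radius $\|x\|_\Sigma$: the paper's $f_Z$ is a univariate density evaluated on $\mathbb R^p$, and unless its level sets coincide with those of $f_X$, the product $g$ need not have level sets homothetic to $E$, in which case $R$ is no longer the bounding box of the superlevel sets of $g$ and the level-by-level comparison changes. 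Until these two points are filled in, the proposal is a correct plan rather than a proof.
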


\begin{proof}
	Note that by definition, the two boxes have sides parallel to the axes of $\mathfrak X'(p)$. The proof is direct because of the assumptions. By 		Proposition \ref{minBox}, $R$ is the minimal box of measure $\beta$ that contains the true bump. Therefore, any other box $R'$ with parallel sides 	to $R$ that contains the bump also contains $R$. Since $R$ is centered around the mean of $Z$, every point $z$ in the support of $Z$ such that 	$z\in R'\setminus R$ have less density than $\arg\min_zf_Z(z)$. Therefore $M(R)>M(R')$. From Proposition 1 we also get that $Vol(R)<Vol(R')$.
	
	Since $R_\rho(p,1)$ is but a particular case of a box $R'$, the result follows.
\end{proof}

Not only $R$ has better volume-adjusted output mean than $R_\rho(p,1)$. We conclude showing the optimality of the latter over the former in terms of bias and variance.

\begin{teo}\label{ComparingPRIMAndFastPRIM}
	Assume $Z \sim N(\mu,\sigma^2)$ and $X \sim N(\textbf 0,\Sigma)$. Define $E$ as the true bump, and let us assume that both $R$ and $R_		\rho(p)$ cover $E$. Then $\textbf {Var}(Z|Y \in R) < \textbf {Var}(Z|Y \in R_\rho(p))$, and $R$ is unbiased while $R_\rho(p)$ is not.
\end{teo}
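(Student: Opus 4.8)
The plan is to treat the two assertions separately, using the law of total variance for the variance comparison and the symmetry of the fastPRIM box for the unbiasedness claim. Throughout I would work in the PC space $\mathfrak X'(p)$, where the coordinates $Y_1',\dots,Y_p'$ are independent (being jointly Gaussian and uncorrelated), and I would use the conditional moments recorded in Section \ref{fastPRIMPC}, namely $\textbf E[Z|Y']=\mu+\sum_{j}\sigma_{ZY_j'}\lambda_j^{-1}Y_j'$ (the Gaussian regression, with intercept $\mu$ since $\textbf E[Y']=\textbf 0$) and the constant conditional variance $\textbf{Var}[Z|Y']=\sigma_Z^2-\sum_j\lambda_j^{-1}\sigma_{ZY_j'}^2=:\tau^2$. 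Both competing boxes are axis-parallel product sets in $\mathfrak X'(p)$ of common probability content $\beta$, so each has the form $\prod_j I_j$; by Lemma \ref{PRIMStandardLemma} and Proposition \ref{minBox} the fastPRIM box $R$ has symmetric intervals $I_j$ centred at the origin, whereas $R_\rho(p)$ is produced by one-sided peels and is therefore generically off-centre in at least one coordinate.

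For the variance statement I would condition on the event $\{Y\in B\}$, with $B\in\{R,R_\rho(p)\}$, and apply the law of total variance,
\[
\textbf{Var}(Z|Y\in B)=\textbf E\big[\textbf{Var}(Z|Y)\,\big|\,Y\in B\big]+\textbf{Var}\big(\textbf E[Z|Y]\,\big|\,Y\in B\big).
\]
Since $\textbf{Var}(Z|Y)\equiv\tau^2$ is constant, the first term equals $\tau^2$ for either box and cancels from the comparison. Using the conditional independence of the $Y_j'$ inside a product box, the second term factorises as $\sum_j(\sigma_{ZY_j'}\lambda_j^{-1})^2\,\textbf{Var}(Y_j'|Y_j'\in I_j)$, so the problem is reduced to a coordinatewise statement about the conditional variance of a symmetric Gaussian marginal over an interval. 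The core ingredient is then the one-dimensional fact that, among intervals of a prescribed probability, the centred symmetric interval minimises the conditional variance of a symmetric unimodal variable; I would establish this by a bathtub / Neyman--Pearson argument, minimising $\int_I (y-m)^2 f(y)\,dy$ jointly over the centre $m$ and over intervals $I$ of fixed mass and checking that symmetry of $f$ forces the optimum to $m=0$. Applied to the intervals of $R$ (all centred) against those of $R_\rho(p)$ (at least one off-centre), this yields the strict inequality $\textbf{Var}(Z|Y\in R)<\textbf{Var}(Z|Y\in R_\rho(p))$.

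For unbiasedness, the relevant target is the location and height of the mode, which by the centring of $X$ sits at $Y=\textbf 0$ with peak response value $\mu$. Because $R$ is symmetric about the origin and each $Y_j'$ is symmetric, $\textbf E[Y_j'|Y_j'\in I_j]=0$, so $\textbf E[Y|Y\in R]=\textbf 0$ and, by linearity of $\textbf E[Z|Y']$, $\textbf E[Z|Y\in R]=\mu$; fastPRIM therefore recovers the true mode location and height exactly, i.e. without bias. For $R_\rho(p)$ the one-sided peeling makes some interval asymmetric, so $\textbf E[Y_j'|Y_j'\in I_j]\neq0$ for that coordinate and $\textbf E[Z|Y\in R_\rho(p)]=\mu+\sum_j\sigma_{ZY_j'}\lambda_j^{-1}\,\textbf E[Y_j'|Y_j'\in I_j]\neq\mu$, which is precisely the bias.

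The main obstacle I anticipate is making the coordinatewise reduction fully legitimate: the two boxes need not share the same marginal probabilities, so the per-coordinate ``centred interval minimises variance at fixed mass'' statement cannot be invoked termwise without care. I would resolve this by framing the comparison as an optimisation over all product boxes of total probability $\beta$ --- first fixing the marginal masses and forcing each interval to be centred by the bathtub argument, then arguing that the symmetric equal-mass configuration realised by fastPRIM is the minimiser --- and by appealing to Proposition \ref{minBox} to certify that $R$ is the centred minimal configuration while $R_\rho(p)$, being off-centre, is strictly suboptimal. Establishing that the optimum of this outer allocation step is indeed attained at the symmetric fastPRIM box is the delicate point, and the symmetry and unimodality of the Gaussian marginals are exactly what I would lean on to close it.
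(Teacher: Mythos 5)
Your route is genuinely different from the paper's, which disposes of the theorem in a few lines: unbiasedness because the fastPRIM box is centered at the mean by construction (and the PRIM box almost surely is not), and the variance inequality from the single observation that, both boxes being axis-parallel in $\mathfrak X'(p)$ and covering $E$, the minimal circumscribing box of Proposition \ref{minBox} satisfies $R \subseteq R_\rho(p)$. Your law-of-total-variance decomposition, with the constant term $\tau^2$ cancelling and the regression term factorising coordinatewise over the product box via independence of the $Y_j'$, is machinery the paper does not supply, and it is exactly what would be needed to turn the paper's bare assertion ``containment implies smaller variance'' into an actual argument. On the unbiasedness half your computation $\textbf E[Z\,|\,Y\in R]=\mu$ versus $\textbf E[Z\,|\,Y\in R_\rho(p)]\neq\mu$ is the paper's argument made concrete.

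The gap is in how you propose to close the variance half. The ``delicate point'' you flag --- that the two boxes need not share marginal masses, so you must optimise over all product boxes of total mass $\beta$ and show the equal-mass symmetric box wins --- is not just delicate, it is the wrong target: the quantity being minimised is the weighted sum $\sum_j(\sigma_{ZY_j'}\lambda_j^{-1})^2\,\textbf{Var}(Y_j'\,|\,Y_j'\in I_j)$ with unequal weights, and when some $\sigma_{ZY_j'}$ vanish (or differ greatly) the equal-mass allocation $\gamma_j=\beta^{1/p}$ is not the minimiser over all allocations with $\prod_j\gamma_j=\beta$; an adversary would concentrate the peeling in the heavily weighted coordinates. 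The hypothesis you are not using is that \emph{both} boxes cover the true bump $E$: by Lemma \ref{Circum} and Proposition \ref{minBox}, $R$ is the minimal axis-parallel box containing $E$, hence $R\subseteq R_\rho(p)$, i.e.\ $I_j\subseteq I_j'$ in every coordinate with $I_j$ centred. Your coordinatewise reduction then only requires that widening a truncation interval of a Gaussian cannot decrease the truncated variance (a standard monotonicity property of log-concave densities), not the fixed-mass extremal lemma, and the outer allocation step disappears entirely. With that substitution your argument closes and in fact supplies the rigour the paper's own proof leaves implicit.
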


\begin{proof}
	Note that $R$ and $R_\rho(p,1)$ are estimators of $B_\beta^*$, as defined in Equation (\ref{OptBox}). Algorithm \ref{AlgofastPRIMPCA} is 		producing unbiased boxes since by construction it is centered around the mean. In fact, $R$ would be unbiased even if not taken in the direction of 	the PC. On the other hand, $\hat R_\rho(p)$ is almost surely biased, even in the direction of the principal components, since it is producing boxes 	that are not centered around the mean.
	
    	Now, the inequality $\textbf {Var}(Z|Y \in R) < \textbf {Var}(Z|Y \in R_\rho(p))$ stems from the fact that $R$ is the box with minimal volume 			containing $E$. Since $R$ is in the direction of the principal components, every other box that contains $E$ in the same direction also contains $R	$, in particular $R \subseteq R_\rho(p)$.
\end{proof}

\section{Simulations}\label{Simulations}

Next, we illustrate how the optimality of the box encapsulating the true bump is improved in the PC space $\mathfrak X'(p)$ as compared to the input space $\mathfrak X(p)$. Empirical results presented in Figure \ref{figure04} are for the same simulation design and the same fastPRIM and PRIM parameters as described in Subsection \ref{Graphical}, except that we now allow for higher dimensionality since no graphical visualization is desired here ($p =100$).

\begin{figure}[!hbt]
	\centering
    	\includegraphics[width=1.0\textwidth]{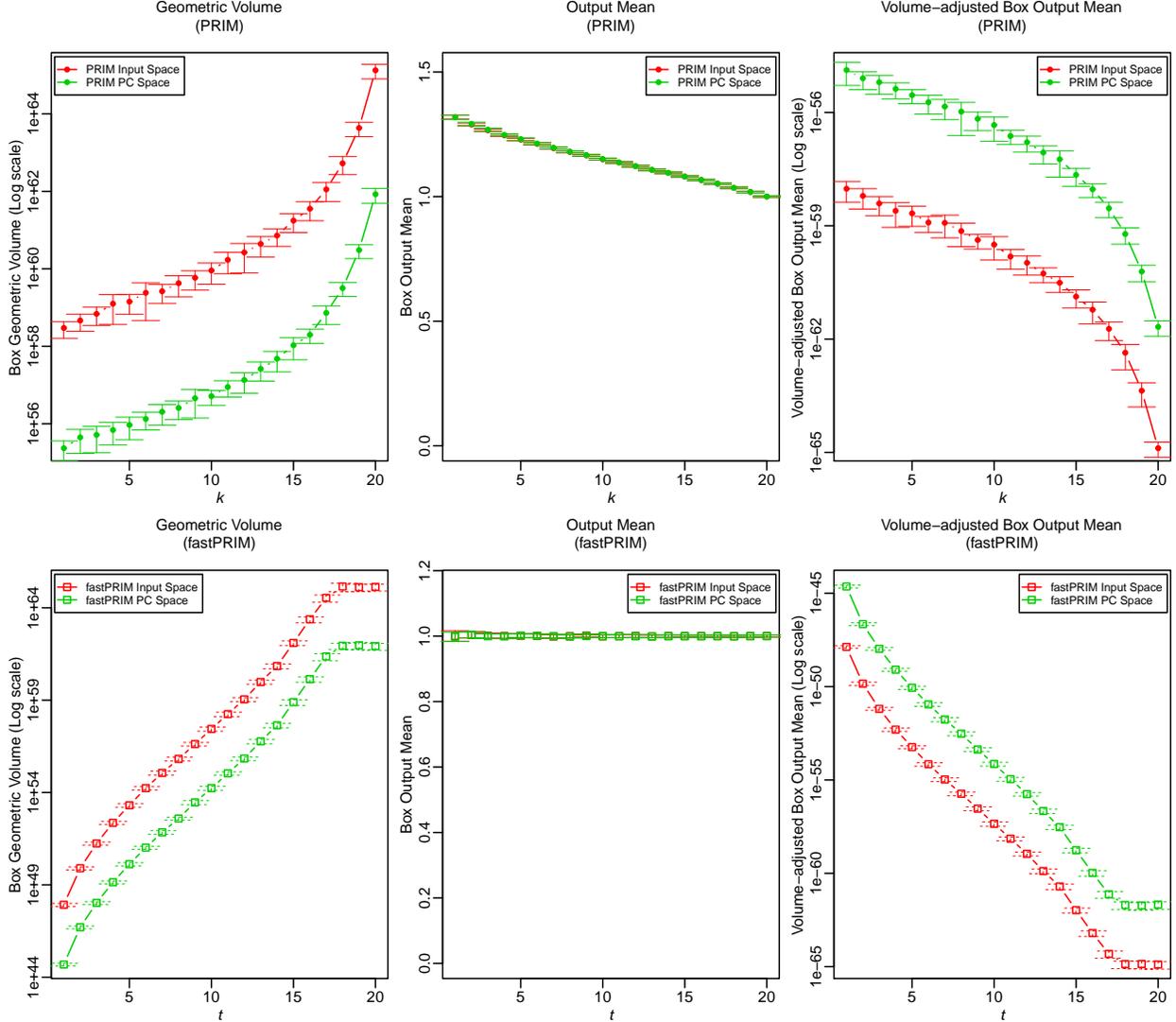}
    	\vskip -10pt
    	\caption{Box statistics and performance metrics as a function of coverage. For all plots, results are plotted against PRIM coverage parameter $k \in 	\{1, \ldots, t\}$ and fastPRIM coverage parameter $t \in \{1, \ldots, 20\}$ in the original input space $\mathfrak X(100)$ (red) vs. the PC space $		\mathfrak X'(100)$ (green), that is for $p = p' = 100$, for each algorithm: PRIM (top row) vs. fastPRIM (bottom row). First column: box geometric 	volume (Log scale); second column: box output (response) mean; third column: volume-adjusted box output (response) mean (Log scale). See 	simulation design for details and metrics definitions. Mean estimates and standard errors of the means are reported after generating 128 Monte-	Carlo replicates.}\label{figure04}
\end{figure}

Some of the theoretical results between the original input space and the PC space are borne out based on the empirical conclusions plotted in Figure \ref{figure04}. In sum, for situations with no added noise, one observes for both algorithms that: i) the effect of PCA rotation dramatically decreases the box geometric volume; ii) the box output (response) means are almost identical in the PC space and in the original input space; and iii) the volume-adjusted box output (response) means are markedly larger in the PC space than in the original input space - indicating a much more concentrated determination of the true bump structure (Figure \ref{figure04}).

Some additional comments:

\begin{enumerate}
	\item As each algorithm covers the space (up to step $k = t$), the box support and the box geometric volume are expected to increase 				monotonically (up to sampling variability) for both algorithms.
	\item The boxes are equivalent for the mean of $Z$ and the mode of $Z$ because $Z$ is normal, we expect the fastPRIM box being centered 			around the mean and therefore the conditional mean of $Z$ should be 1 (because in this simulation the mean of $Z$ is 1). While, the box for 		$Z$ given PRIM must have a different conditional expectation. This justifies the fact of looking at the mode of $Z$ inside the boxes, and not 		directly the mode of $Z$.
	\item Since the the box output (response) mean is almost perfectly constant at 1 for fastPRIM and close to 1 for PRIM, it is expected that the box 		volume-adjusted output mean decreases monotonically at the rate of the box geometric volume for both algorithms.
    	\item Also, as coverage $k,t$ increases, the two boxes $R$ and $R_\rho(p)$ of each algorithm converge to each other (covering most of the 			space), so it is expected that the output (response) means inside the final boxes converge to each other as well (i.e. towards the whole space 		mean response 1).
    	%\item In the bump hunting objective stated in Equation \ref{Goal} (i.e. to find a region $C$ such that its average $ave(C)$ is larger than its average 		over the whole space $ave(\mathbb R^p)=\rho$), recall that in practice, one is to use a coverage $k,t$ in PRIM and in fastPRIM that is not too 		large. So in practice, there is a trade-off between maximizing the conditional expectation of the response over the 		target region $C$ and maximizing the support of $C$, so that the two boxes $R$ and $R_\rho(p)$ usually differ for a moderate value of 			coverage $t$.
\end{enumerate}

To illustrate the effect of increasing dimensionality, we plot in Figure \ref{figure05} the profiles of gains in volume-adjusted box output (response) mean as a function of increasing dimensionality $p \in \{2,3,\ldots,8,9,10,20,30,\ldots,180,190,200\}$. Here, the gain is measured in terms of a ratio of the quantity of interest in the PC space $\mathfrak X'(p')$ over that in the original input space $\mathfrak X(p)$. Empirical results presented are for the same simulation design and the same fastPRIM and PRIM parameters as described in subsection \ref{Graphical}. Notice the extremely fast increase in volume-adjusted box output (response) mean ratio as a function of dimensionality $p$, that is, the marked larger value of volume-adjusted box output (response) mean in the PC space as compared to the one in the input space for both algorithms. Notice also the weak dependency with respect to the coverage parameters ($k,t$).\\

\begin{figure}[!hbt]
    \centering
    \includegraphics[width=0.8\textwidth]{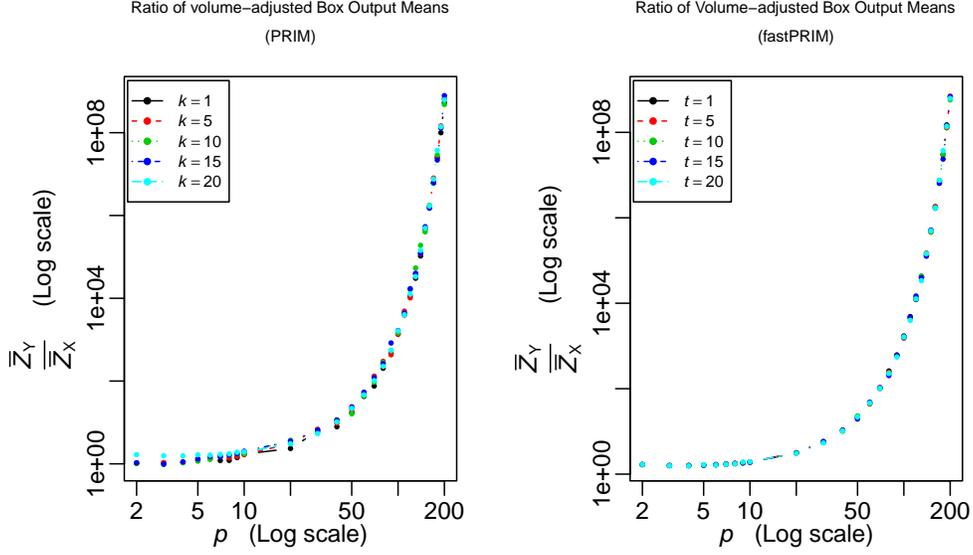}
    \vskip -15pt
    \caption{Gains profiles in volume-adjusted box output (response) mean as a function of dimensionality $p$. For all plots, comparison of box statistics and performance metrics profiles are reported as a ratio of the values obtained in the PC space $\mathfrak X'(p')$ (denoted Y) over the original input space $\mathfrak X(p)$ (denoted X). We show empirical results for varying dimensionality $p \in \{2,3,\ldots,8,9,10,20,30,\ldots,180,190,200\}$, a range of PRIM and fastPRIM coverage parameters ($k,t \in \{1, 5, 10, 15, 20\}$), and for both algorithms: PRIM (left) vs. fastPRIM (right). Both coordinate axes are on the log scale.}
    \label{figure05}
\end{figure}

Further, using the same simulation design and the same fastPRIM and PRIM parameters as described in subsection \ref{Graphical}, we compared the efficiency of box estimates generated by both algorithms in the PC space $\mathfrak X'(p')$ as a function of dimension $p'$ and coverage parameters $k,t$ for PRIM or fastPRIM, respectively. Notice, the reduced box geometric volume (Figure \ref{figure06}) and increased box volume-adjusted output (response) mean (Figure \ref{figure07}) of fastPRIM as compared to PRIM.

\begin{figure}[!hbt]
    \centering
    \includegraphics[width=0.8\textwidth]{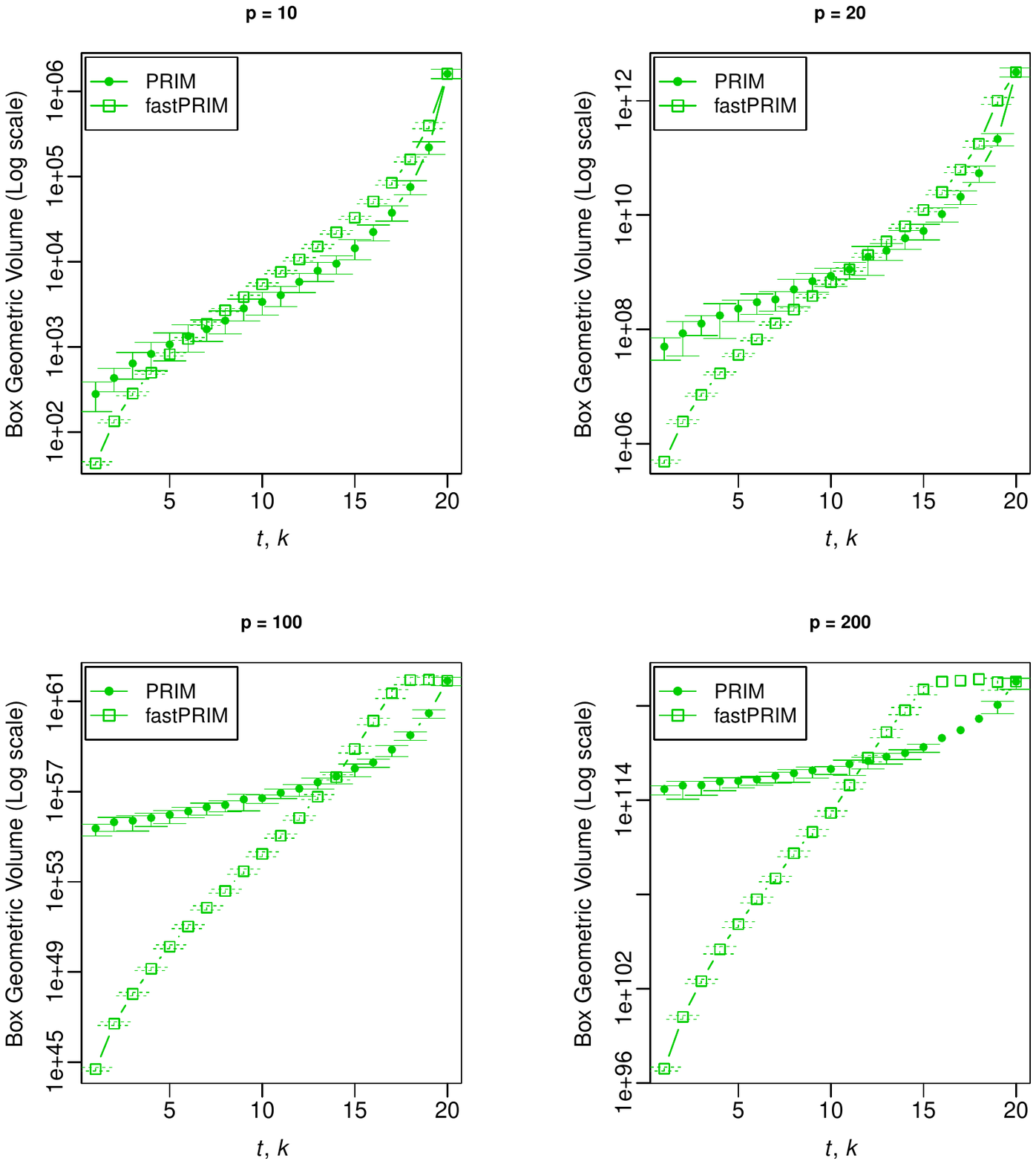}
    \vskip -15pt
    \caption{Comparative profiles of box geometric volumes in the PC space $\mathfrak X'(p')$ as a function of dimension $p'$ and coverage parameters $k \in \{1, \ldots, t\}$ or $t \in \{1, \ldots, 20\}$ for PRIM or fastPRIM, respectively. We show results for a range of dimension $p' \in \{10,20,100,200\}$ and a range of PRIM and fastPRIM coverage parameters $k \in \{1, \ldots, t\}$ or $t \in \{1, \ldots, 20\}$. The 'y' axes are on the Log scale.}
    \label{figure06}
\end{figure}

\begin{figure}[!hbt]
    \centering
    \includegraphics[width=0.8\textwidth]{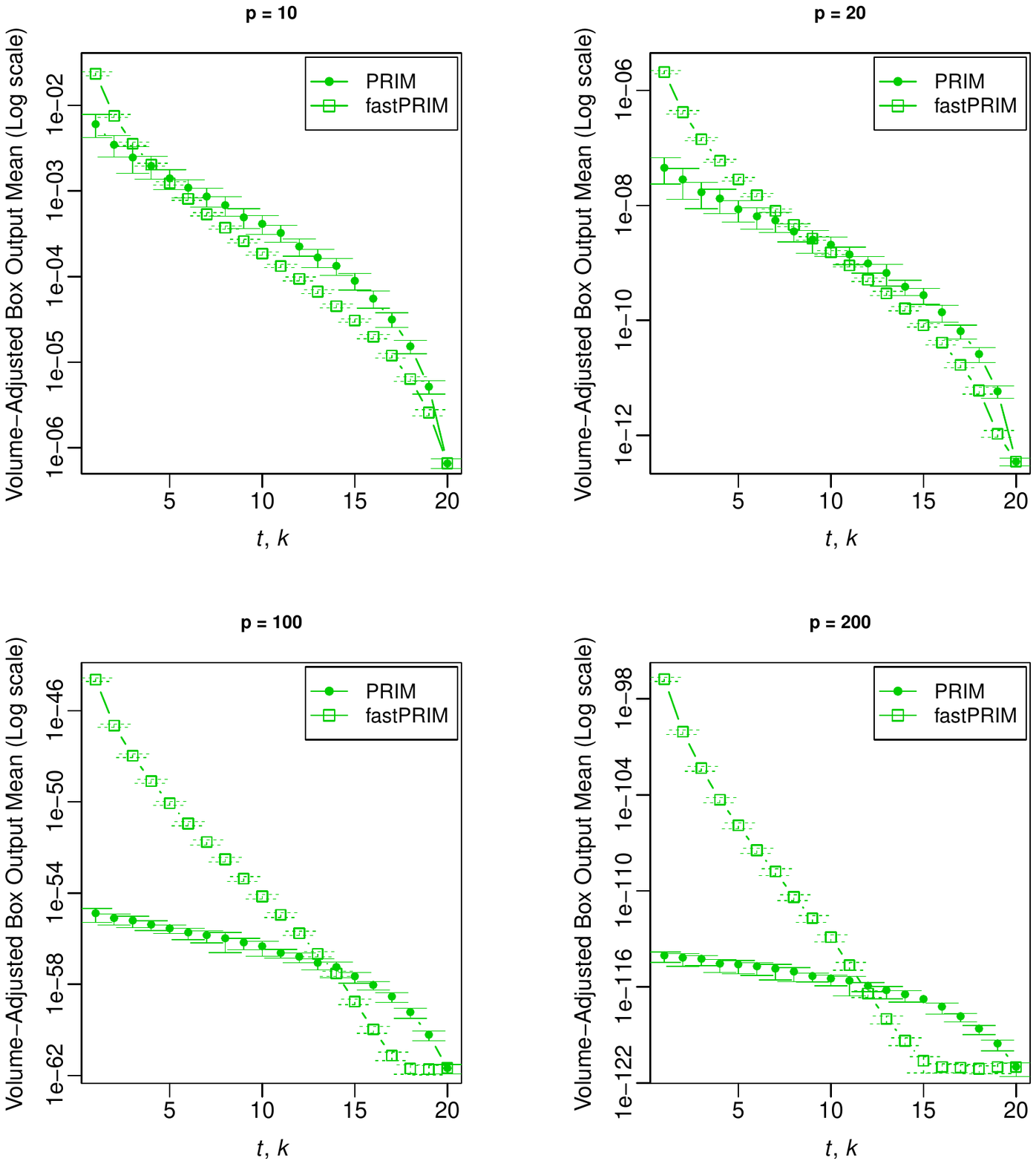}
    \vskip -15pt
    \caption{Comparative profiles of box volume-adjusted output (response) means in the PC space $\mathfrak X'(p')$ as a function of dimension $p'$ and coverage parameters $k \in \{1, \ldots, t\}$ or $t \in \{1, \ldots, 20\}$ for PRIM or fastPRIM for PRIM and fastPRIM, respectively. We show results for a range of dimension $p' \in \{10,20,100,200\}$ and a range of PRIM and fastPRIM coverage parameters $k \in \{1, \ldots, t\}$ or $t \in \{1, \ldots, 20\}$. The 'y' axes are on the Log scale.}
    \label{figure07}
\end{figure}

Finally, in Figures \ref{figure08} and \ref{figure09} below we compare variances of fastPRIM and PRIM volume-adjusted box output (response) means in the PC space $\mathfrak X'(p')$ as a function of dimension $p'$ and coverage parameters $k,t$ for PRIM or fastPRIM, respectively. Empirical results are presented for the same simulation design and the same fastPRIM and PRIM parameters as described in subsection \ref{Graphical}. Results show that the variance of fastPRIM box geometric volume (Figure \ref{figure08}) is reduced than its PRIM counterparts for coverage $t$ not too large ($ \leq 10-15$), which is matched to a reduced variance of fastPRIM \emph{volume-adjusted} box output (response) mean for coverage $t$ not too small ($ \leq 10-15$).

\begin{figure}[!hbt]
    \centering
    \includegraphics[width=0.8\textwidth]{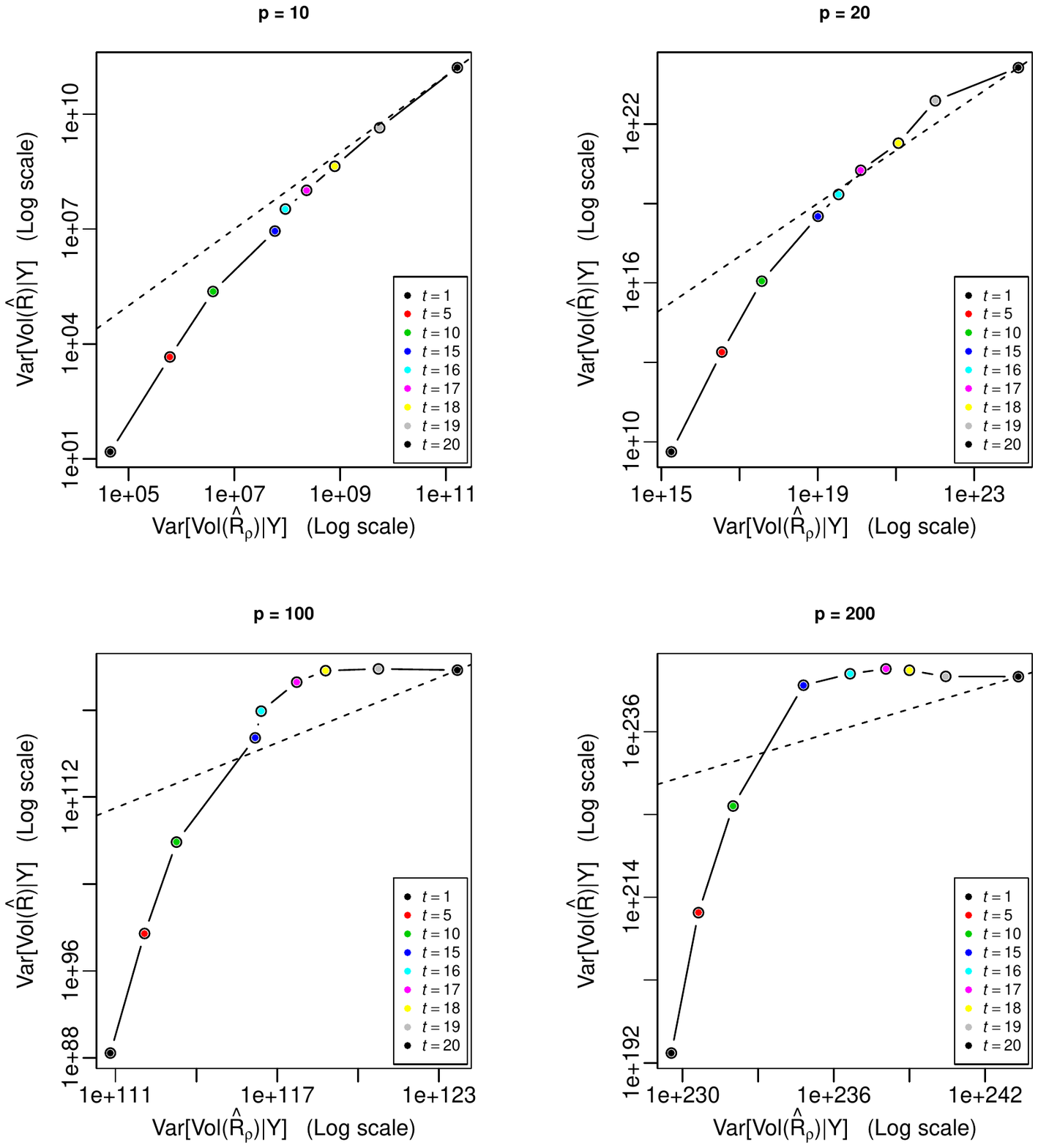}
    \vskip -15pt
    \caption{Comparative profiles of variances of box geometric volumes in the PC space $\mathfrak X'(p')$ as a function of dimensionality $p'$ and coverage parameters $k \in \{1, \ldots, t\}$ or $t \in \{1, \ldots, 20\}$ for PRIM or fastPRIM, respectively. In all subplots, we show the variances of box geometric volumes of both algorithms against each other for a range of PRIM and fastPRIM coverage parameters ($k,t \in \{1, 5, 10, 15,16,17,18,19 20\}$) in four dimensions $p' \in \{10,20,100,200\}$. The identity (doted) line is plotted. All axes are on the Log scale.}
    \label{figure08}
\end{figure}

\begin{figure}[!hbt]
    \centering
    \includegraphics[width=0.8\textwidth]{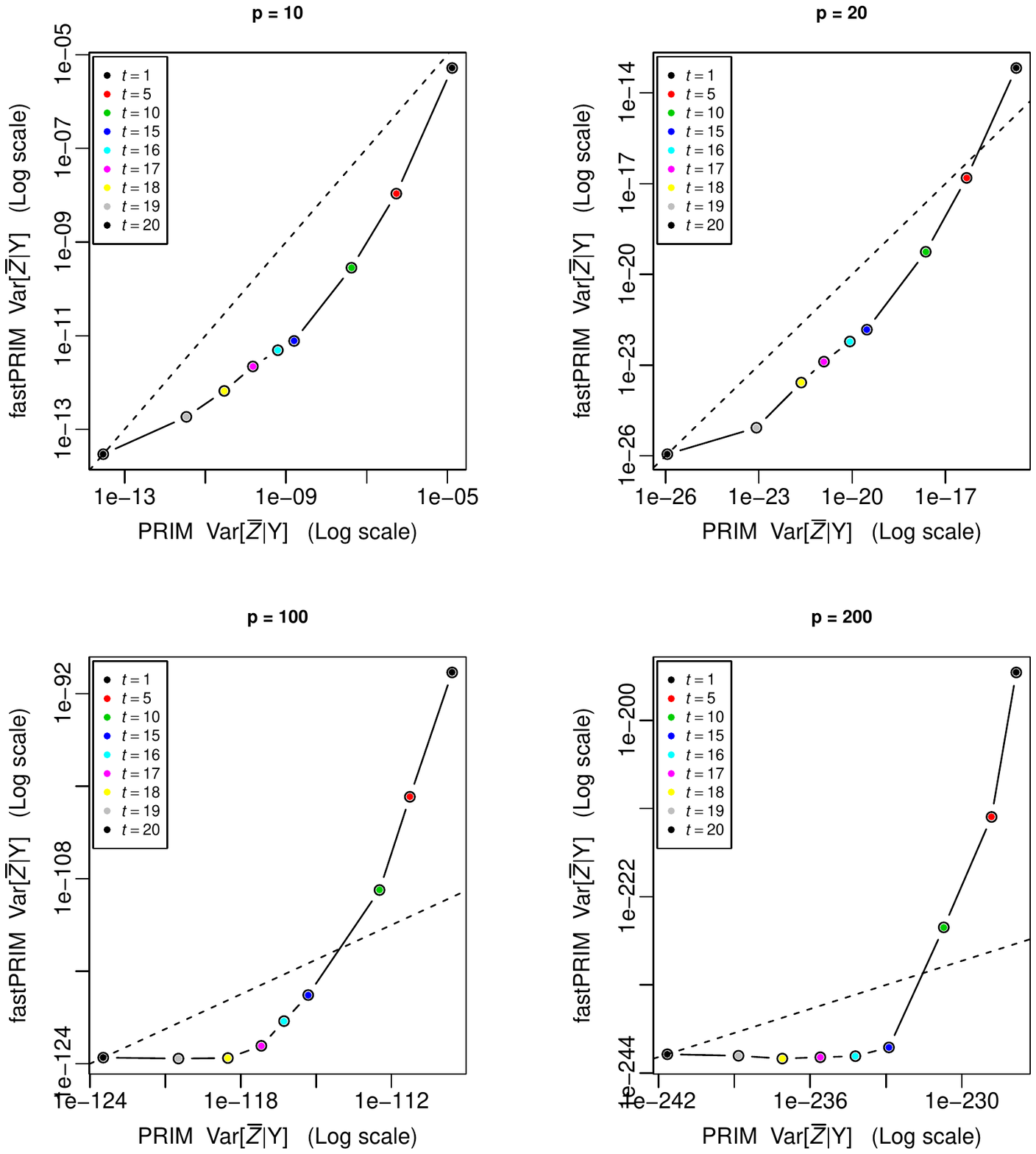}
    \vskip -15pt
    \caption{Comparative profiles of variances of box volume-adjusted output (response) means in the PC space $\mathfrak X'(p')$ as a function of dimensionality $p'$ and coverage parameters $k \in \{1, \ldots, t\}$ or $t \in \{1, \ldots, 20\}$ for PRIM or fastPRIM, respectively. In all subplots, we show the variances of the volume-adjusted box output (response) means of both algorithms against each other for a range of PRIM and fastPRIM coverage parameters ($k,t \in \{1, 5, 10, 15,16,17,18,19 20\}$) in four dimensions $p' \in \{10,20,100,200\}$. The identity (doted) line is plotted. All axes are on the Log scale.}
    \label{figure09}
\end{figure}

Of note, the results in Figures \ref{figure06} and \ref{figure07} below, and similarly in \ref{figure08} and \ref{figure09}, are for the sample size $n = 1000$ of this simulation design. In particular, efficiency results of fastPRIM versus PRIM box estimates show some dependency with respect to coverage parameters $k,t$ for large coverages and increasing dimensionality. As discussed above, this reflects a finite sample-effect favoring PRIM box estimates in these coverages and dimensionality.

Notice finally in Figures \ref{figure06} and \ref{figure07} how the curves approach each other for the largest coverage step $k = t = 20$, and similarly in \ref{figure08} and \ref{figure09} how the curves approach the identity line. This is in line with the aforementioned convergence point of the two boxes $R$ and $R_\rho(p)$ as coverage increases.

\newpage
\ 
\newpage
\
\newpage
\
\newpage

\section{Discussion}

Our analysis here corroborates what \citet{DiazRaoDazard2014} have showed on how the rotation of the input space to the one of principal components is a reasonable thing to do when modeling a response-predictor relationship.  In fact, \citet{DazardRao2010} use a \emph{sparse} PC rotation for improving bump hunting in the context of high dimensional genomic predictors.  And \citet{DazardRaoMarkowitz2012} also show how this technique can be applied to find additional heterogeneity in terms of survival outcomes for colon cancer patients. The geometrical analysis we present here shows that as long as the principal components are not being selected prior to modeling the response, then these improved variables can produce more accurate mode characterizations.  In order to elucidate this effect, we introduced the fastPRIM algorithm, starting with a supervised learner and ending up with an unsupervised one. This analysis opens the question on whether is possible to go from supervised to unsupervised settings in more general bump hunting situations, not only modes; and more generally, whether is possible to go from unsupervised to supervised in other learning contexts beyond bump hunting.

\vskip0.2in
\noindent
{\bf Acknowledgements}:  All authors supported in part by NIH grant NCI R01-CA160593A1. We would like to thank Rob Tibshirani, Steve Marron and Hemant Ishwaran for helpful discussions of the work. This work made use of the High Performance Computing Resource in the Core Facility for Advanced Research Computing at Case Western Reserve University.

\bibpunct[,]{(}{)}{;}{a}{}{,}
{\bibliographystyle{JCGS_style}
\bibliography{MyBibliography}

\end{document}